\documentclass{article}

\usepackage[margin=1in,footskip=0.25in]{geometry}

\usepackage[hyphens]{url}            

\usepackage[utf8]{inputenc} 
\usepackage[T1]{fontenc}    
\usepackage{hyperref}       
\usepackage{amsfonts}       
\usepackage{microtype}      
\usepackage{xcolor}         
\usepackage{natbib}
\usepackage{comment}
\usepackage{tikz}
\usetikzlibrary{positioning, calc, decorations.pathreplacing}

\def\showauthornotes{0}
\usepackage{amsmath,amssymb}
\usepackage{algorithm}
\usepackage{algpseudocode}
\usepackage{cryptocode}
\usepackage{float}
\usepackage{amsthm}
\usepackage{bbm}
\newtheorem{theorem}{Theorem}[section]
\newtheorem*{theorem*}{Theorem}

\newtheorem*{proposition*}{Proposition}
\newtheorem{lemma}[theorem]{Lemma}
\newtheorem*{lemma*}{Lemma}

\newtheorem*{conjecture*}{Conjecture}

\newtheorem*{fact*}{Fact}

\newtheorem*{hypothesis*}{Hypothesis}

\theoremstyle{definition}
\newtheorem{definition}[theorem]{Definition}
\newtheorem*{definition*}{Definition}

\newtheorem*{problem*}{Problem}

\theoremstyle{remark}

\newtheorem*{claim*}{Claim}

\newtheorem*{remark*}{Remark}
\newtheorem{observation}[theorem]{Observation}
\newtheorem*{observation*}{Observation}

\usepackage{prettyref}
\newcommand{\pref}{\prettyref}

\newcommand{\savehyperref}[2]{\texorpdfstring{\hyperref[#1]{#2}}{#2}}

\newrefformat{eq}{\savehyperref{#1}{\textup{(\ref*{#1})}}}
\newrefformat{prog}{\savehyperref{#1}{\textup{Program \ref*{#1}}}}
\newrefformat{eqn}{\savehyperref{#1}{\textup{(\ref*{#1})}}}
\newrefformat{lem}{\savehyperref{#1}{Lemma~\ref*{#1}}}
\newrefformat{def}{\savehyperref{#1}{Definition~\ref*{#1}}}
\newrefformat{thm}{\savehyperref{#1}{Theorem~\ref*{#1}}}
\newrefformat{cor}{\savehyperref{#1}{Corollary~\ref*{#1}}}
\newrefformat{cha}{\savehyperref{#1}{Chapter~\ref*{#1}}}
\newrefformat{sec}{\savehyperref{#1}{Section~\ref*{#1}}}
\newrefformat{app}{\savehyperref{#1}{Appendix~\ref*{#1}}}
\newrefformat{tab}{\savehyperref{#1}{Table~\ref*{#1}}}
\newrefformat{fig}{\savehyperref{#1}{Figure~\ref*{#1}}}
\newrefformat{hyp}{\savehyperref{#1}{Hypothesis~\ref*{#1}}}
\newrefformat{alg}{\savehyperref{#1}{Algorithm~\ref*{#1}}}
\newrefformat{rem}{\savehyperref{#1}{Remark~\ref*{#1}}}
\newrefformat{item}{\savehyperref{#1}{Item~\ref*{#1}}}
\newrefformat{step}{\savehyperref{#1}{step~\ref*{#1}}}
\newrefformat{conj}{\savehyperref{#1}{Conjecture~\ref*{#1}}}
\newrefformat{fact}{\savehyperref{#1}{Fact~\ref*{#1}}}
\newrefformat{prop}{\savehyperref{#1}{Proposition~\ref*{#1}}}
\newrefformat{prob}{\savehyperref{#1}{Problem~\ref*{#1}}}
\newrefformat{claim}{\savehyperref{#1}{Claim~\ref*{#1}}}
\newrefformat{relax}{\savehyperref{#1}{Relaxation~\ref*{#1}}}
\newrefformat{red}{\savehyperref{#1}{Reduction~\ref*{#1}}}
\newrefformat{part}{\savehyperref{#1}{Part~\ref*{#1}}}
\newrefformat{obs}{\savehyperref{#1}{Observation~\ref*{#1}}}
\newrefformat{corr}{\savehyperref{#1}{Corollary~\ref*{#1}}}

\ifnum\showauthornotes=1
\newcommand{\Authornote}[2]{{\sffamily\small\color{red}{[#1: #2]}}}
\newcommand{\Authornotecolored}[3]{{\sffamily\small\color{#1}{[#2: #3]}}}
\newcommand{\Authorcomment}[2]{{\sffamily\small\color{gray}{[#1: #2]}}}
\newcommand{\Authorstartcomment}[1]{\sffamily\small\color{gray}[#1: }

\newcommand{\Authorfnote}[2]{\footnote{\color{red}{#1: #2}}}
\newcommand{\Authorfixme}[1]{\Authornote{#1}{\textbf{??}}}
\newcommand{\Authormarginmark}[1]{\marginpar{\textcolor{red}{\fbox{\Large #1:!}}}}
\else
\newcommand{\Authornote}[2]{}
\newcommand{\Authornotecolored}[3]{}
\newcommand{\Authorcomment}[2]{}
\newcommand{\Authorstartcomment}[1]{}

\newcommand{\Authorfnote}[2]{}
\newcommand{\Authorfixme}[1]{}
\newcommand{\Authormarginmark}[1]{}
\fi



\newcommand{\Brac}[1]{\left[#1\right]}


\newcommand{\Abs}[1]{\left\lvert#1\right\rvert}




\newcommand{\norm}[1]{\lVert#1\rVert}






\newcommand{\iprod}[1]{\langle#1\rangle}

\newcommand{\Esymb}{\mathbb{E}}
\newcommand{\Psymb}{\mathbb{P}}

\DeclareMathOperator*{\E}{\Esymb}

\DeclareMathOperator*{\ProbOp}{\Psymb}

\renewcommand{\Pr}{\ProbOp}


\newcommand{\Ex}[2][]{\E_{{#1}}\Brac{#2}}







\DeclareMathOperator{\sign}{sign}




\newcommand{\R}{\mathbb R}


\newcommand{\cA}{\mathcal A}
\newcommand{\cB}{\mathcal B}

\newcommand{\cF}{\mathcal F}
\newcommand{\cG}{\mathcal G}
\newcommand{\cH}{\mathcal H}


\newcommand{\eps}{\epsilon}


\newcommand{\horacle}{\mathcal{O}}

\newcommand{\NTIMEo}{\textsf{NTIME}^{\horacle}}

\newcommand{\coNTIMEo}{\textsf{coNTIME}^{\horacle}}
\newcommand{\clso}{\Gamma^{\horacle}}

\newcommand{\func}{\mathcal{F}}
\newcommand{\vx}{\boldsymbol{x}}

\newcommand{\rinit}{r_{\textrm{init}}}
\DeclareMathOperator{\bern}{\mathcal{B}}
\newcommand{\1}{\boldsymbol{1}}

\DeclareMathOperator{\Adv}{Adv}

\newcommand{\pq}{p}

\newcommand{\Jnote}{\Authornotecolored{red}{J}}

\newcommand{\Znote}{\Authornotecolored{cyan}{Zhiyang}}

\title{Avoiding Obfuscation with Prover-Estimator Debate}

\author{
  Jonah Brown-Cohen\\
  Google DeepMind\\
  \texttt{jonahbc@google.com}
  \and
  \setcounter{footnote}{1}
  Geoffrey Irving\thanks{Work completed while at UK AI Security Institute.} \\
  Resolution\\
  \texttt{irving@resolution.org}
  \and
  Georgios Piliouras\\
  Google DeepMind\\
  \texttt{gpil@google.com}\\
  \and
  Lijie Chen\\
  UC Berkeley\\
  \texttt{lijiechen@berkeley.edu}
  \and
  Jiawei Li\\
  UT Austin\\
  \texttt{davidlee@cs.utexas.edu}
  \and
  Zhiyang Xun\\
  UT Austin\\
  \texttt{zxun@cs.utexas.edu}
}

\begin{document}

\maketitle

\begin{abstract}
Training powerful AI systems to exhibit desired behaviors hinges on the ability to provide accurate human supervision on increasingly complex tasks. A promising approach to this problem is to amplify human judgement by leveraging the power of two competing AIs in a debate about the correct solution to a given problem. Prior theoretical work has provided a complexity-theoretic formalization of AI debate, and posed the problem of designing protocols for AI debate that guarantee the correctness of human judgements for as complex a class of problems as possible. Recursive debates, in which debaters decompose a complex problem into simpler subproblems, hold promise for growing the class of problems that can be accurately judged in a debate. However, existing protocols for recursive debate run into the \emph{obfuscated arguments problem}: a dishonest debater can use a computationally efficient strategy that forces an honest opponent to solve a computationally intractable problem to win.
We mitigate this problem with a new recursive debate protocol that, under certain stability assumptions, ensures that an honest debater can win with a strategy requiring computational efficiency comparable to their opponent.
\end{abstract}
\begin{figure}[ht!]
\centering
\begin{tikzpicture}[
    node distance=1.5cm,
    a_node/.style={circle, draw, fill=blue!30, minimum size=0.9cm},
    x_node/.style={circle, draw, fill=gray!30, minimum size=0.9cm},
    b_node/.style={rectangle, draw, fill=red!30, minimum size=0.9cm},
    thick,
    scale=0.85, transform shape 
]

\node[anchor=south] at (-3, 5) {\textbf{Original recursive debate}};

\node[x_node] (x1) at (-3, 4) {$x$};

\node[a_node] (y11) at (-5, 2) {$y_1$};
\node[a_node] (y12) at (-3, 2) {$y_2$};
\node at (-1.5, 2) {$\cdots$};
\node[a_node] (y1m) at (0, 2) {$y_q$};
\node[text=orange!90!black] at (0, 2.7) {flaw!};

\node[text=blue!70!black, align=center] at (-5.2, 3.3) {$A$ makes\\subclaims};

\draw (x1) -- (y11);
\draw (x1) -- (y12);
\draw (x1) -- (y1m);

\draw[->] (y12) -- +(0,-2)
node[midway, right, text=red!70!black, align=left] {$B$ chooses\\a subclaim};

\node[anchor=south] at (7, 5) {\textbf{Prover-estimator debate}};

\node[x_node] (x2) at (7, 4) {$x$};

\node[a_node] (y21) at (5, 2) {$y_1$};
\node[a_node] (y22) at (7, 2) {$y_2$};
\node at (8.5, 2) {$\cdots$};
\node[a_node] (y2m) at (10, 2) {$y_q$};
\node[text=orange!90!black] at (10, 2.7) {flaw!};

\node[text=blue!70!black, align=center] at (5.0, 3.3) {$A$ makes\\subclaims};

\draw (x2) -- (y21);
\draw (x2) -- (y22);
\draw (x2) -- (y2m);

\draw[->, thick, >=stealth, black] (1.5, 2) -- (3.5, 2)
node[midway, above] {Our paper};

\node[b_node] (p1) at (5, 0.4) {$p_1$};
\node[b_node] (p2) at (7, 0.4) {$p_2$};
\node at (8.5, 0.4) {$\cdots$};
\node[b_node] (pm) at (10, 0.4) {$p_q$};

\draw (y21) -- (p1);
\draw (y22) -- (p2);
\draw (y2m) -- (pm);

\node[text=red!70!black, align=center] at (8.5, 1.4) {$B$ assigns\\probabilities};

\draw[->] (p2) -- +(0,-2.3)
node[midway, right, text=blue!70!black, align=left] {$A$ chooses a subclaim\\and claims $B$ is wrong\\in a specific direction};

\end{tikzpicture}
\caption{The original recursive debate protocol suffered from the \textit{obfuscated arguments problem}: debater $A$ could decompose an easy question $x$ into hard subclaims $y_1, y_2, \ldots, y_q$, and debater $B$ would fail to find the flaw even if he knew one existed. In prover-estimator debate, $B$ assigns probabilities to subclaims and $A$ chooses a probability to claim that $B$ is wrong in a specific direction. Since $A$ must point to a flaw in $B$'s probabilities, $B$ wins if neither player can locate a flaw.}
\label{fig:debate-comparison}
\end{figure}
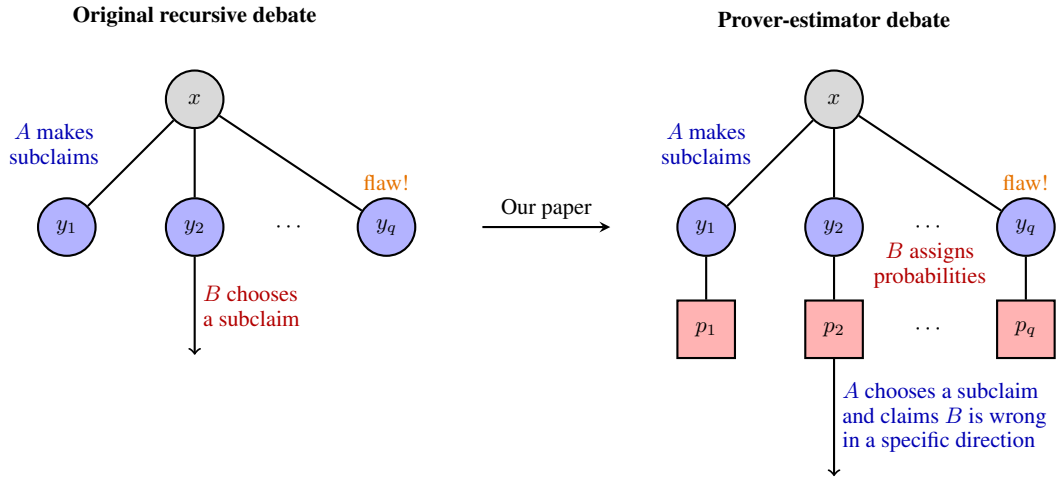
\section{Introduction}

Modern machine learning (ML) is fundamentally driven by, and further enables, fast and complex computations. We leverage AI precisely because it allows us to automate and optimize tasks with an efficiency that surpasses human capabilities. This rapid computation, permeating increasingly high-complexity and intelligence tasks, is the engine of progress in countless domains. However, this very strength creates a fundamental tension: the speed and complexity of these computations make exhaustive inspection impossible. Understanding and verifying the internal workings of these powerful AI systems---what's going on ``behind the scenes''---becomes increasingly intractable. This tension directly leads to the critical problem of \emph{scalable oversight}: how to provide accurate training supervision, such that these powerful, yet opaque, AI systems behave as intended, even when their tasks are too complex for direct human understanding and verification?

Ensuring the reliable and safe operation of advanced AI systems is a central challenge for the field. The potential for unintended consequences arising from misaligned or poorly understood AI behavior requires the development of robust oversight mechanisms in order to provide a training reward signal that accurately reflects human values and intentions. Effective scalable oversight is not merely a desirable feature; it is a foundational requirement for building trustworthy AI that aligns with human intentions and avoids producing systematically undesirable outcomes. Progress in this area is crucial for realizing the full potential of AI while mitigating potential risks.

Various approaches to scalable oversight have been proposed, including amplification \citep{christiano2018supervising}, recursive reward modeling \citep{leike2018scalable}, and AI debate \citep{irving2018ai}. All of these approaches are based on the intuition that it is possible to amplify the reward signal coming from human judgment by leveraging improving AI capabilities. For the case of AI debate, \cite{irving2018ai} formalized this intuition in the language of computational complexity theory: they proved that a computationally limited human can accurately judge the correct solution to problems of much larger computational complexity by observing a specifically structured debate between two computationally powerful AIs.
The main idea is that the two powerful AI debaters recursively break down a complex problem into simpler sub-problems, eventually terminating in subproblems that are simple enough for the computationally limited human to judge directly.

As an example, consider the case of training a software-engineering AI system to produce large, complex pull requests. Rather than have a human expert carefully review each pull request during training, one could train the AI system via debate, where the first debater produces the pull request, and the second debater acts as a code reviewer attempting to point out flaws in the produced code. For example, the code-reviewing debater might claim that a particular function returns an incorrect value on a certain input. The coding debater could defend their code by tracing through the function on that input and arguing that the correct value is returned. The code-reviewing debater might then zero-in on a single step of the trace calling a library function, and claim that the library function does not behave as the coding debater has claimed.
The coding debater might appeal to the library's documentation to justify their claim, and the code-reviewing debater might respond by claiming that the version of the library in the production environment differs from that documentation.
At this point, a human judge can quickly check which of the two debaters is correct about the production library version, without having to read and fully understand the complex pull request.
Hence, the human judge only has to understand a specific, narrow claim about one piece of the code to provide an accurate reward signal.

However, a significant vulnerability exists in many recursive approaches, including recursive debate: the \emph{obfuscated arguments problem} \citep{barnes2020obfuscated}. In recursive debate, a dishonest debater can \emph{strategically} decompose an easy problem into difficult, even intractable, subproblems. This renders an honest debater unable to find a flaw in their opponent’s argument even if both debaters know that a flaw exists somewhere. The debate will thus fail to identify the correct answer.   Obfuscation in recursive debate is not merely a theoretical concern; it emerged naturally in prior empirical work \citep{barnes2020obfuscated}, suggesting obfuscation is a likely \emph{attractor state} in some settings. 

\cite{brown-cohen24scalable} avoid the issue of obfuscation in debate by doing away with recursion entirely. In this case, an efficient honest debater can always win, but debaters are required to write-out the full human-judgeable argument for the correctness of their claims. To return to the software engineering example, this would mean that the coding debater has to provide complete, detailed documentation of how and why every single piece of the code works, such that any plausible critique by the code-reviewing debater is already answered by the documentation. This is a significant sacrifice relative to recursive debate, which allows the debaters to efficiently and adaptively zoom-in on a small fraction of the full, detailed human-judgeable argument.

\paragraph{Our Approach.} We introduce \emph{prover-estimator debate}, a protocol for AI debate that mitigates the obfuscated arguments problem while retaining the efficiency of recursive debate. In prover-estimator debate, the prover (Alice) decomposes a problem into subclaims. The estimator (Bob) has the \emph{sole} task of assigning probabilities to the prover's subclaims. The prover uses these probabilities to decide which subclaim to recursively debate further. This departs from previous debate protocols, where the opponent selects which subclaim to challenge. We show that Bob can always produce probabilities that are \emph{indistinguishable from the truth by Alice}, while expending computational effort not too much greater than Alice. Alice cannot therefore (on average) select a subproblem where Bob is non-trivially wrong. However, prover-estimator debate has limitations. Alice’s task becomes harder because Bob can defeat an argument by assigning low probabilities to many subclaims. This rules out certain delicate, yet correct arguments that are highly sensitive to the probabilities assigned to subproblems. We formalize this condition by introducing a \emph{stability} requirement, 
\emph{$\rho$-stability}, which demands that the validity of arguments cannot hinge on arbitrarily small changes in subclaim probabilities\footnote{An earlier preprint had an error in the main theorem due to an issue with the way stability was defined. At a high level, the corrected definition bounds changes in  marginal probabilities of subclaims, while the original mistaken definition used conditional probabilities. This error was found and fixed by the last three authors.}. We require stability only for usefulness, not safety: even if stability does not hold, an honest estimator can still win against any incorrect argument by a dishonest prover.

To gain intuition for how prover-estimator debate works, let us return to the pull request example.
Anyone who has undertaken code review knows that reading code and directly identifying bugs is a difficult task. However, it is often possible to estimate where bugs are likely to occur from the structure of the code, and then to request additional evidence in the form of tests that demonstrate the desired behavior.
Prover-estimator debate in this setting requires Alice to provide reliable test coverage, so that even when Bob is uncertain that any single test provides sufficient evidence that the code is correct, taken together it is still likely that Alice's code exhibits the desired behavior.
On the other hand, if Alice has engaged in obfuscation by producing code that fails in a way neither Alice nor Bob can readily identify, then Bob can output low probabilities for Alice's claims, and Alice will not be able to reliably identify any probability as being unreasonable.

\paragraph{Our Results.} We provide rigorous theoretical guarantees demonstrating that prover-estimator debate, \emph{combined with the stability requirement}, mitigates the obfuscated arguments problem. Specifically, under reasonable assumptions about the structure of the problems being debated (formalized as \emph{recursively decomposable problems} and the $\rho$-stability condition), we show:

\begin{enumerate}
    \item \textbf{Completeness:} An honest prover, who correctly provides a stable decomposition into solvable subproblems, can always win the debate against any dishonest estimator.
    \item \textbf{Soundness:} An honest estimator can win on average against a dishonest prover, \emph{even if stability is violated and the subproblems are obfuscated}, using computational effort comparable to their dishonest opponent. 
\end{enumerate}

The requirement that it is possible for the prover to provide a stable decomposition is the primary restriction on the class of problems that can be accurately judged by our protocol. That is, there can be problems where the prover can efficiently decompose the original problem into solvable subproblems, but all such efficient decompositions are unstable. In this case, our protocol errs on the side of safety, and rejects the prover's argument. This is necessary for the protocol to succeed, because it cannot distinguish between true and false arguments made via unstable decompositions. Note that all that is required is that the prover is able to find a \emph{single} stable decomposition to use in the debate. Even if other, unstable decompositions exist, the prover will lose on average when using them, and thus will be incentivized via debate training to select the winning, stable decomposition. Intuitively, the stability condition can be seen as requiring that small changes to marginal probabilities of subclaims should not have large effects on the truth of the original claim. We argue that this is a natural requirement when dealing with fuzzy human judgments, where one should be fundamentally suspicious of arguments that depend very precisely on the composition of many inherently imprecise subclaims.

Prover-estimator debate provides a step towards building robust and trustworthy AI systems capable of tackling complex tasks beyond direct human oversight. Our formal notion of argument stability, in particular, clarifies the conditions required to avoid the obfuscated arguments problem and enables more targeted research on overcoming this problem. Our approach also suggests general lessons for designing future debate protocols, including the value of asymmetric debate protocols and explicit uncertainty estimates. This work opens up new avenues for research, both theoretical and empirical, in the pursuit of safe and aligned AI.

\section{Related work}
The prior theoretical work on the design of protocols for AI debate consists primarily of \cite{irving2018ai}, which introduced AI debate and its complexity-theoretic formalization, and \cite{brown-cohen24scalable} which introduced doubly-efficient debate. As per the introduction, the original debate proposal makes the too-strong assumption that both debaters are computationally unbounded. Doubly-efficient debate relaxes this assumption, at the price of solving a more limited class of problems. In a complementary direction, the work of \cite{chen2023playing} focuses on the theory of learning-in-games for AI debate. The objective in this case is to design efficient learning algorithms for debate-inspired games where the space of strategies is exponentially large, but the outcome of the game can be efficiently judged given strategies of two players.

The much earlier work in complexity theory by \cite{chandra1981alternation} introduced, with different terminology, the question of studying the complexity of polynomial-time verifiable arguments between computationally unbounded provers. 
Later \cite{feige1997making} proved stronger results in a complexity-theoretic model of an efficient verifier that judged arguments between competing computationally unbounded provers.
A similar model with competing provers was studied in the context of delegation of computation to untrusted servers in \cite{canetti2013refereed}.

There has recently been a line of research applying single-prover interactive proofs to machine learning for proving correctness of model outputs \citep{amit2024models, anil2021learning,hammondneural}, and for improving interpretability and legibility \citep{waldchen2022merlin, kirchner2024prover}.
Empirical work on debate has focused primarily on debates where both the debaters and the judge are large language models (LLMs), and has shown some positive signs that debate allows for more accurate judgements on standard datasets \citep{michael2023debate,kentonscalable,khan2024debating}.

\section{Preliminaries}
We use the notation $[n] = \{1,\dots,n\}$.
For a sequence of elements $x_1,\dots, x_n$ from a set $X$ and $S\subseteq [n]$, we use the notation $x_S$ for the subsequence of $x_i$ with $i \in S$. Similarly, we write $x_{\leq i}$, $x_{<i}$, and $x_{\neq i}$ to denote the subsequence consisting of $x_j$ where $j \leq i, j < i$, and $j\neq i$ respectively.
For a probability $p \in [0,1]$ we use the notation $\bern(p)$ to denote the Bernoulli random variable that takes value $1$ with probability $p$, and $0$ otherwise. For a finite set $Z$, let $\Delta_Z$ denote the set of all probability distributions over $Z$, i.e., $\Delta_Z = \{p : Z \to [0,1] \mid \sum_{z \in Z} p(z) = 1\}$.

\subsection{Complexity-theoretic debate}
As in prior work on debate, we use computational complexity theory to formalize the power of the debaters and human judges, as well as the complexity of the problems that can be judged via debate.

\paragraph{Algorithms and oracles.} Algorithms are modeled by Turing machines $M:\{0,1\}^*\to\{0,1\}$ which take an input $x$ and output a yes-or-no answer encoded as a bit $M(x)\in\{0,1\}$. Furthermore, we consider \emph{oracle Turing machines} that additionally have black-box access to an oracle $\horacle:\{0,1\}^{n} \to \{0,1\}$. Once a query $u \in \{0,1\}^{n}$ is written on its tape, an oracle Turing machine can receive the oracle answer $\horacle(u) \in \{0,1\}$ in a single step.
We use the notation $M^\horacle$ to denote an oracle Turing machine with black-box query access to the oracle $\horacle$.
In the setting of AI debate, the Turing machine $M$ corresponds to a natural language plan or set of instructions to follow, and queries to the oracle $\horacle(u)$ corresponds to querying human judgement on a particular question $u$.

\paragraph{Classes of problems.} The goal in debate is to design protocols that allow us to solve a broad class of computational problems, while requiring few queries to the human judgement oracle $\horacle$.
A computational problem with a yes-or-no answer is encoded by a \emph{language} $L\subset \{0,1\}^*$ consisting of all the inputs $x$ for which the answer is ``yes''. We abuse notation to view $L$ as a function $L:\{0,1\}^*\to \{0,1\}$ where $L(x)=1$ if $x\in L$ and $L(x)=0$ otherwise. 
The primary class of problems that we consider is $\NTIMEo(T)$, the set of problems where for every instance the answer is ``yes'' if and only if there is proof that the answer is ``yes'', and this proof can be verified in time $T$ with access to the human judgement $\horacle$.
\begin{definition}
    A language $L$ is in $\NTIMEo(T)$ if there is a time $T$ oracle Turing machine $M$ such that $x\in L$ if and only if there exists a proof $y$ of length at most $T$ such that $M^{\horacle}(x,y) = 1$.
\end{definition}
We also use the class $\coNTIMEo(T)$ of languages $L$ whose complement is in $\NTIMEo(T)$. $\coNTIMEo(T)$ is the set of yes-or-no problems with efficiently checkable proofs of ``no''.
\begin{definition}
    A language $L$ is in $\coNTIMEo(T)$ if there is a time $T$ oracle Turing machine $M$ such that $x\notin L$ if and only if there exists a proof $y$ of length at most $T$ such that $M^{\horacle}(x,y) = 1$.
\end{definition}
We use the notation $\clso(T) = \NTIMEo(T)\cap\coNTIMEo(T)$ for the set of problems where one can verify both the cases $x\in L$ and $x \notin L$ with a proof that is checkable in time $T$ given access to the ground-truth oracle. Our main results focus on this class.

Our debate protocols will rely on recursively decomposing the execution of a verifier machine $M$ into individual steps. To formalize this we introduce the notion of the \emph{transcript} of a machine $M$.
\begin{definition}
   The \emph{transcript} of a time $T$ machine $M$ on input $x$ is a string $\tau \in \{0,1\}^T$ where $\tau_i$ is the bit written at the current head position of $M$ at time step $i$.
\end{definition}

\subsection{Game theory}
The main game-theoretic concept used in our results is a \emph{Stackelberg equilibrium}. This equilibrium concept models a situation where one player (the leader) must commit to a strategy, after which the other player (the follower) chooses their strategy. A Stackelberg equilibrium is a strategy for the leader that achieves the best possible payoff, given that the follower plays a best response.

\begin{definition}[Stackelberg equilibrium]
Consider a two-player (leader and follower) game with a sequential structure, where
$S_L$ is the set of strategies for the leader, $S_F$ is the set of strategies for the follower,
$u_L(s_L, s_F)$ is the payoff for the leader, and $u_F(s_L, s_F)$ is the payoff for the follower.
The game proceeds as follows: first the leader chooses a strategy $s_L \in S_L$,
then the follower observes $s_L$ and chooses a strategy $s_F \in S_F$.

An \emph{$\alpha$-approximate Stackelberg equilibrium} is a strategy profile $(s_L^*, s_F^*(s_L))$ with

\begin{enumerate}
    \item \textit{Follower's best response:} For every leader strategy $s_L$, the follower's strategy $s_F^*(s_L)$ is an approximate best response:
    \begin{equation}
    u_F(s_L, s_F^*(s_L)) \geq u_F(s_L, s_F) - \alpha \quad \forall s_F \in S_F.
    \end{equation}

    \item \textit{Leader's optimal strategy:} The leader's strategy $s_L^*$ approximately maximizes the leader's payoff, \emph{given} the follower's best response function:
    \begin{equation}
    u_L(s_L^*, s_F^*(s_L^*)) \geq u_L(s_L, s_F^*(s_L)) - \alpha \quad \forall s_L \in S_L.
    \end{equation}
\end{enumerate}
\end{definition}

Some intuition is useful for why Alice-leading, Bob-following Stackelberg equilibria are the right setting. For completeness, we will use a prover Alice that always gives correct answers: this will win against any Bob regardless of Bob's computational complexity, and thus works as an Alice-leading strategy. For soundness, however, we must find an estimator Bob that uses computation not too much larger than Alice, which we do by repeatedly playing Alice against herself. This method of Bob construction works only once Alice is fixed, and thus requires Bob to be the Stackelberg follower.

\section{Debate protocols and games}
\label{sec:debate-games}
The goal of debate is to provide an accurate training signal on questions too hard for humans to judge directly. Training with debate involves asking an AI $A$ to give the answer to a question $x$ drawn from a distribution $\mu$, and having another AI $B$ debate the correctness of $A$'s answer. At the end of the debate, human judgment is used to provide a reward signal to both AI debaters involved.
Thus, the more specific goal is to design a \emph{debate protocol}, i.e.\ the rules of the debate and the rewards, such that training $A$ and $B$ with the reward signal provided by the debate protocol leads to $A$ giving the correct answer to $x$.
This can be formalized game theoretically: a debate protocol induces a game between the AI debaters, and the protocol succeeds if every equilibrium of the game leads $A$ to correctly answer almost all questions $x \sim \mu$.

We begin with the formal definition of a debate protocol, which specifies two sets of possible strategies as well as the verification procedure to determine the rules and outcome of a debate.

\begin{definition}[Debate protocol]
 A \emph{debate protocol} is given by a family of tuples $(\cA_n,\cB_n,V)$, where for each natural number $n$, $\cA_n$ and $\cB_n$ are finite sets of \emph{strategies} defined by functions $A,B :\{0,1\}^{O(n)} \to \{0,1\}^{O(n)}$, and the verifier $V:\{0,1\}^*\to[-1,1]$ is a polynomial time oracle Turing machine. 
\end{definition}

A debate protocol induces a zero-sum game where debaters select strategies from $\cA_n$ and $\cB_n$, exchange messages, and the verifier determines a final payoff based on all the messages in the debate.

\begin{definition}[Two-player, zero-sum debate game]
 Given a language $L$ and a distribution $\mu$ on inputs $x\in \{0,1\}^n$, a debate protocol induces a two-player zero-sum game where one player selects a strategy $A\in\cA_n$, the other a strategy $B\in\cB_n$. The payoffs are defined by the following process:
 \begin{enumerate}
    \item Sample a random $x\sim \mu$.
    \item The $A$ player outputs a bit $A(x) \in \{0,1\}$ to claim that $L(x) = A(x)$ (that is, $A$ suggests whether the claim is true or false).
    \item The players $A$ and $B$ interact by exchanging messages on input $x$. In particular, $A$ and $B$ generate a sequence of messages $a_i = A(x,a_1,b_1,\dots,a_{i-1},b_{i-1})$ and $b_i = B(x,a_1,b_1,\dots,a_{i-1})$.
    \item The verifier $V$ runs on the interaction to determine the payoffs. 
    \item The $A$ player receives payoff $V^{A,B}(x)$ and the $B$ player receives payoff $-V^{A,B}(x)$.
 \end{enumerate}
\end{definition}

The rules of the debate are encoded by the verifier $V$, while the sets of functions $\cA_n,\cB_n$ encode the possible strategies that debaters $A$ and $B$ can employ. For example, $V$ might be a set of prompts and scaffolding for an LLM debate, while $\cA_n$ and $\cB_n$ are sets of possible LLMs of given architecture and size.
When the input length $n$ is clear we write $\cA$ and $\cB$ for the strategy sets.
Our setting in this paper is when $\cA$ and $\cB$ both contain functions of bounded computational complexity. 
We measure complexity purely relatively, i.e.\ we show that it is possible to incentivize the $A$-player to provide the correct solution when the set of strategies $\cB$ are all low-complexity functions of strategies from $\cA$. Each debate will then also have low $\cA$-complexity, as each round of the debate requires a constant number of queries to the strategies of $A$ and $B$.

\section{Recursively decomposable problems}
\label{sec:decomposable-problems}
In this section we introduce the problem class for which our debate protocols will succeed. The intuition is that for any $L \in \clso(T)$, checking the proof $y$ that $x\in L$ or $x\notin L$ can be recursively decomposed into subproblems. Given a transcript $\tau$ of a verifier machine $M(x,y)$, a natural decomposition is to split into the two subproblems of checking that the first half of $\tau$ is correct and that the second half is correct. This decomposition can be repeated $\log T$ times until the final level, which checks individual steps in $\tau$. We formalize the notion of a recursive decomposition as follows.
\begin{definition}
    \label{def:recursive-decomposition}
    Let $L \in \clso(T)$ be a language and $M$ the machine that verifies proofs for membership of $x\in L$ (or $x \notin L$ respectively).
    A depth $d$, width $q$ \emph{recursive decomposition} for $L$ is defined by a time $O(nq)$ machine $M_D$ that takes as input an original input $x\in \{0,1\}^n$, a sequence of $q$ \emph{queries} $y = (y_1,\dots,y_q)\in\{0,1\}^{n\times q}$, and a sequence of $q$ \emph{query answers} $z = (z_1,\dots,z_q) \in \{0,1\}^q$.
    For any $x \in L$ there exists an \emph{implicit proof} $u\in\{0,1\}^m$ with $m \leq q^d$ such that for all $k \in [d]$, the correct answers $z^k$ to queries $y^k$ are defined recursively as follows. 
    \begin{enumerate}
        \item For the base case $k = 0$, for all leaves $i$ there exists a subset $S_i \subseteq [m]$ of coordinates in the implicit proof, such that the query $y_i^0 = S_i$, and the correct answer $z_i^0 = \horacle(u_{S_i})$.
        \item The correct answers to queries $y^{k-1}$ satisfy $z^k_i = M_D(x,y^{k-1}_i,z^{k-1})$, where $z^{k-1}$ are the correct answers to $y^{k-1}$. That is, answers at $k-1$ support answers at $k$.
        \item At the top level $k = d$ of the recursion, $M_D(x,y^d,z^d) = M^{\horacle}(x,y^*)$ where $y^*$ is a witness to the original machine $M^{\horacle}$ for $L$.
    \end{enumerate}
\end{definition}
Intuitively a recursive decomposition breaks down the task of checking that $x\in L$ into $q$ subproblems, which are themselves broken down into $q$ subproblems, with the recursion bottoming out in ground-truth oracle queries to an implicit proof $u$.
Thus, the queries $y_i^k$ at non-leaf steps in the recursion have correct answers $z_i^k$ that are functions of larger subsets of the $u$, with the top level depending on the entire implicit proof.
For every language $L\in\clso(T)$ there is a natural recursive decomposition of depth $d=\log T$ and width $q=2$, where the implicit proof $u$ is just the transcript $\tau$ of $M(x,y^*)$. The base case queries $y_i^0$ are just individual steps in the transcript $\tau$, the queries $y_i^1$ are claims that $2$ consecutive steps of $\tau$ are performed correctly, and in general $y_i^k$ is the claim that $2^k$ consecutive steps of $\tau$ are correct, which is clearly the AND of two queries $y_1^{k-1},y_2^{k-1}$ regarding sequences of $2^{k-1}$ steps. Hence, $M_D$ simply computes the AND function on the query answers $z_1$ and $z_2$.

For a general language $L\in\clso(T)$ some of the subproblems given by queries $y_{i}$ may be computationally intractable for debaters using only algorithms $A\in \cA$, while others can be efficiently solved by some $A\in\cA$. Our goal is to design a protocol that is agnostic to the difficulty of the subproblems. If all subproblems are easy it should be possible for the honest debater to win the debate without expanding the full proof tree corresponding to the recursive decomposition. 
At the same time, it should not be possible for a dishonest debater to win by intentionally decomposing an easy problem into hard subproblems, which the honest debater is unable to refute.
To achieve this goal, the computational complexity of the provers in the debate will be measured purely relatively, i.e.\ $\cB$ contains strategies that are low complexity functions of the strategies in $\cA$.

We next capture problems where an AI can solve any subproblem in the recursive decomposition if asked, and thus in some sense ``knows'' the correct answer. The definition requires that there is a strategy $A\in\cA$ that correctly answers all queries that appear in the recursive decomposition $M_D$.

\begin{definition}[$\cA$-provable]
\label{def:A-provable}
For each natural number $n$, let $\cA_n$ be a set of functions on $\{0,1\}^n$ and let $0 < \gamma < 1$. Let $\mu$ be a distribution on inputs $x$ of length $n$, let $L\in\clso(T)$ be a language, and let $M_D$ be a recursive decomposition of $L$. The decomposition $M_D$ is $\cA$-provable with probability $1-\gamma$ if there exists a single $A\in\cA$ such that with probability at least $1-\gamma$ over $x \sim \mu$:
\begin{enumerate}
\item In the base case of the decomposition, $A(x,j) = u_{j}$ for any $j\in S_i$, where $S_i$ is the subset given by $y_i^0$.
\item For all queries $y_i^k$ in the recursive decomposition, $A(x,y_i^k) = z_i^k$ is the correct query answer.
\item $A(x) = L(x)$.
\end{enumerate}
\end{definition}

We also need the arguments made in the debate to not depend too precisely on the probabilities of the answers to the subproblems.

\begin{definition}[$\rho$-stability]\label{def:epsilon-stable}
    For any $z  \in \{0,1\}^q$, let $D_{\epsilon}(z)$ be the set of distribution $\pq$ over $\{0, 1\}^q$ such that for any $i \in [q]$, $\Pr_{z' \sim \pq}[z'_i \neq z_i] \leq \epsilon$.

    Let $x\in\{0,1\}^n$ be an instance of a problem $L \in \clso(T)$ and let $M_D$ define a width $q$ recursive decomposition of $L$.
    $M_D$ is $\rho$-stable at node $(x,y)$ if for any $\epsilon > 0 $ and any distribution $\pq \in D_{\epsilon}(z)$, where $z$ is the ground truth solution to $(x,y)$, there is 
    \[\Pr_{z'\sim \pq}\left[M_D(x, y, z') \neq M_D(x, y, z)\right] \leq \rho \cdot \epsilon.\]

    Further, $M_D$ is $\rho$-stable for an input $x$ w.r.t. Alice's algorithm $A$ if for every node $(x,y)$ that might be visited in the debate tree generated by $A$, $M_D$ is $\rho$-stable at $(x,y)$.
\end{definition}

For simplicity, we may say $M_D$ is $\rho$-stable for an input $x$
when Alice's algorithm $A$ is clear from the context. Interestingly, this stability condition is captured by the notion of \emph{fractional block sensitivity} from boolean function analysis (see, e.g.,~\cite{KulkarniTal2016fbs}); this connection is further discussed in \pref{sec:stability-fbs}.

Intuitively \pref{def:epsilon-stable} says that changing the \emph{marginal} probabilities of the sequence $z$ by less than $\epsilon$ leads to small changes in the output of the verifier machine.
In practical settings where the truth is determined by aggregation of many pieces of uncertain evidence (e.g. empirical science, legal arguments), $\rho$-stability will hold for small constant values of $\rho$. For example, suppose that one wants to determine whether a majority of scientific literature supports the efficacy of a certain drug. If 2/3rds of studies show that the drug is effective, then even if up to 1/6th of all published studies turned out to have critical flaws, the majority outcome remains unchanged. In this case, the claim about the majority of scientific literature is $\rho$-stable for $\rho = 4$.
To see why, one can check that by symmetry, the way to maximize the chance of flipping the majority vote while minimizing the change in the marginal probability of overturning each individual study is to uniformly select 1/6th of the studies to flip among the 2/3 that say the drug is effective. In this case $\epsilon = \frac{1/6}{2/3} = 1/4$ and flipping these 1/6 of the studies flips the majority with probability $1 = 4\epsilon$. Thus the majority vote is $\rho$-stable with $\rho = 4$ in the case that 2/3rds of the inputs are set to true.

\section{The Prover-Estimator Debate Protocol}
\label{sec:two-player-protocol}

\begin{figure*}[ht!]
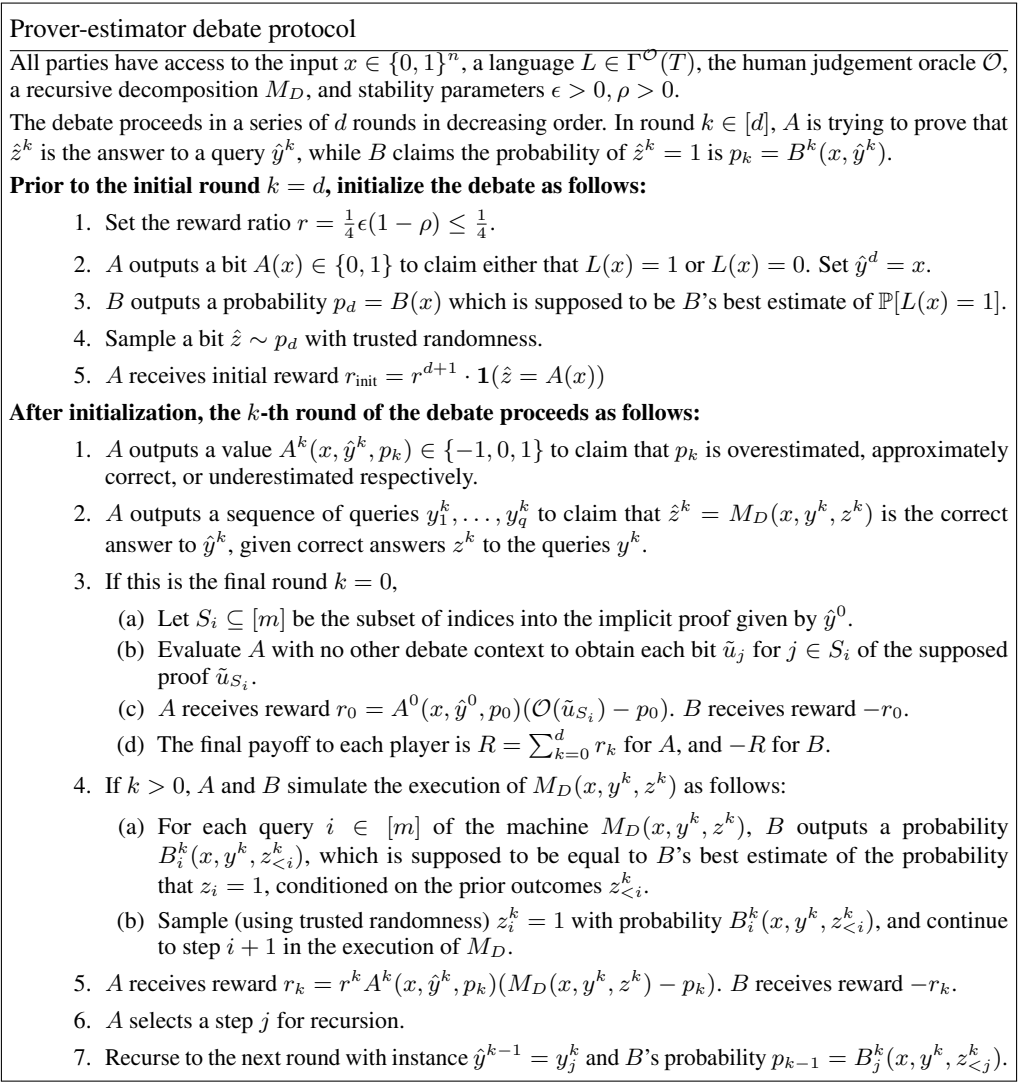

\begin{pchstack}[center,boxed]
      \procedure[mode=text,width=0.95\textwidth]{Prover-estimator debate protocol}{
      All parties have access to the input $x\in\{0,1\}^n$, a language $L\in\clso(T)$, the human judgment oracle $\horacle$, a recursive decomposition $M_D$, and stability parameters $\rho > 0$.\\
      The debate proceeds in a series of $d$ rounds in decreasing order.\\ 
      \textbf{Prior to the initial round $k=d$, initialize the debate as follows:}
      \begin{enumerate}
        \item Set the reward ratio $r = 1 / \rho$.
        \item $A$ outputs a bit $A(x) \in \{0,1\}$ to claim either that $L(x)=1$ or $L(x)=0$. Set $\hat{y}^d = x$.
        \item $B$ outputs a probability $\hat{p}_d = B(x)$ which is supposed to be $B$'s best estimate of $\Pr[L(x) = 1]$.
        \item Sample a bit $z \sim \hat p_d$ with trusted randomness.
        \item $A$ receives initial reward $\rinit = r^{d+1}\cdot\1({z} = A(x))$
      \end{enumerate}
      \textbf{After initialization, the $k$-th round of the debate proceeds as follows:}
        \begin{enumerate}
            \item $A$ outputs a value $A^k(x,\hat{y}^{k},\hat{p}_k)\in\{-1,0,1\}$ to claim that $\hat{p}_k$ is overestimated, approximately correct, or underestimated respectively.
            \item $A$ outputs a sequence of queries $y^k = (y^k_1,\dots,y^k_q)$ to claim that $\hat{z}^k \coloneqq M_D(x,y^k,z^k)$ is the correct answer to $\hat{y}^k$, given correct answers $z^k$ to the queries $y^k$.
            \item If this is the final round $k=0$, 
            \begin{enumerate}
                \item Let $S_i \subseteq [m]$ be the subset of indices into the implicit proof given by $\hat{y}^0$.
                \item Evaluate $A$ with no other debate context to obtain each bit $\tilde{u}_j$ for $j\in S_i$ of the supposed proof $\tilde{u}_{S_i}$. 
                \item $A$ receives reward $r_0 = A^0(x,\hat{y}^0,\hat{p}_0)(\horacle(\tilde{u}_{S_i}) -\hat{p}_0)$. $B$ receives reward $-r_0$.
                \item The final payoff to each player is $R = \sum_{k=0}^d r_k$ for $A$, and $-R$ for $B$.
            \end{enumerate}
            \item If $k > 0$, $B$ outputs a distribution $p_k \coloneqq B^k(x, y^k)$ over $\{0,1\}^q$ as its estimation for $z^k$.
            \item Sample $z^k \sim p_k$ with trusted randomness.
            \item $A$ receives reward $r_k = {r^{k}A^k(x,\hat{y}^k, \hat{p}_k)(M_D(x,y^k,z^k) - \hat{p}_k)}$. $B$ receives reward $-r_k$.
            \item $A$ selects a step $j$ for recursion.
            \item Recurse to the next round with instance $\hat{y}^{k-1} \coloneqq y^k_{j}$ and $B$'s probability $\hat{p}_{k-1} \coloneqq  \Pr_{z^k \sim p_k}[z^k_j = 1]$, i.e., $\hat{p}_{k-1}$ is the marginal probability of $z^k_j$ in $p_k$.
        \end{enumerate}
    }
\end{pchstack}
\caption{The prover-estimator debate protocol.}
\label{fig:protocol-cut-and-choose}
\end{figure*}

In this section we give our debate protocol and the main theorems that it correctly incentivizes debaters for problems in $\clso$.
The intuition is as in \pref{fig:debate-comparison}: debater $A$ decomposes the problem into subproblems, and debater $B$ gives probabilities for the answers to each subproblem. Debater $A$ then selects a subproblem for recursion where $B$'s probability is allegedly incorrectly estimated. Rewards are assigned in each round to $A$ for correctly predicting inconsistencies between $B$'s probabilities in the current round versus the previous round, with the available reward growing by a factor of $1/r$ in each round.
The final round uses the human judgment oracle $\horacle$ rather than $B$'s probabilities.

\subsection{The protocol}
Our protocol for two-player zero-sum debate is given in \pref{fig:protocol-cut-and-choose}. We can now state our main theorem.
\begin{theorem}
\label{thm:cut-and-choose}
Let $\cA$ be any set of functions on $\{0,1\}^n$ and $\eta > 0$. 
Let $\mu$ be a distribution on inputs $x\in\{0,1\}^n$, and $L \in \clso(n^c)$ be a language with a recursive decomposition $M_D$ of depth $d$ and width $q$. Let 0 < $\gamma < 1/2$ and take a stability parameter $\rho$.
Then the protocol of \pref{fig:protocol-cut-and-choose} run on $M_D$ satisfies:
 \begin{itemize}
     \item \textbf{Completeness.} Suppose $M_D$ is $\cA$-provable with probability $1-\gamma$ over $x\sim \mu$, which is achieved by a function $A^* \in \cA$. If $M_D$ is also $\rho$-stable at $x$ w.r.t. $A^*$ with probability $1-\gamma$, then there exists $A \in \cA$ such that for all $B$ (of any size), with probability at least $1 - 2\gamma$
     \[
        \Ex{V^{A,B}(x)} \geq \frac{1}{\rho^{d+1}},
     \]
     where the expectation is over the internal randomness of the protocol only.
     \item \textbf{Soundness.} Let $\cB$ be the set of circuits of size $O\left(q 2^{3q}nd^2/\eta^2\right)$ that may query a function $A\in \cA$ at each gate. For every strategy $A\in \cA$, there exists $B\in\cB$ such that,
      \[
        \Ex[x\sim \mu]{V^{A,B}(x)} \leq \frac{\Pr[A(x) = L(x)]}{\rho^{d+1}} + \eta.
     \]
 \end{itemize}
\end{theorem}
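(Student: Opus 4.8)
The plan is to handle completeness and soundness separately, in each case writing out the relevant strategy and then tracking the telescoping payoff $R=\rinit+\sum_{k=0}^{d}r_k$ round by round. The calibration is the engine of the proof: with $\rho=1/2$ one has $r=\tfrac18\eps$, and the choice $d+1=\tfrac{\log n}{\log(8/\eps)}$ makes $r^{d+1}=1/n$ and $r^{d}=8/(\eps n)$, which are exactly the two magnitudes the bounds reduce to.

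For completeness, take $A\in\cA$ to be the strategy witnessing $\cA$-provability: $A$ submits the correct decomposition at every round, the true implicit-proof bits at the leaves, and $A(x)=L(x)$. A union bound over the two ``probability $1-\gamma$'' hypotheses gives that for a $\ge 1-2\gamma$ fraction of $x\sim\mu$ the decomposition is simultaneously $\cA$-provable and $(\eps,1/2)$-stable at $x$; fix such an $x$. Write $\hat z^k$ for the true answer to the current query $\hat y^k$, set $\delta_k=|p_k-\hat z^k|$, and let $\bar p_k=\Pr_{z\sim\bern(B^k)}[M_D(x,y^k,z)=1]$ be the probability on $\hat y^k$ induced by Bob's subclaim probabilities, $\bar\delta_k=|\bar p_k-\hat z^k|$. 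Alice always submits the correct decomposition and leaf bits, and plays as follows. If $\delta_d\le\eps$ she plays $A^k=0$ in every round and recurses arbitrarily, so $\E[R]=\E[\rinit]=r^{d+1}(1-\delta_d)\ge\tfrac{1-\eps}{n}$. Otherwise $\delta_d>\eps$ and Alice ``hunts'': whenever she is in a round with $\delta_k>\eps$, if every $B_i^k$ is within $\eps$ of the corresponding true subanswer at every prefix, then $(\eps,1/2)$-stability forces $\bar\delta_k<\eps/2<\delta_k$ (stability applies since the reference probabilities are $0/1$-valued, so the unperturbed induced probability is exactly $\hat z^k$), and so the honest correction $A^k=2\hat z^k-1$ has expected round reward $r^k(\delta_k-\bar\delta_k)>r^k\eps/2\ge r^d\eps/2=4/n$, after which she plays $A=0$ forever; if instead some $B_i^k$ is off by $\ge\eps$ she recurses into such a subclaim, keeping $\delta_{k-1}>\eps$ and continuing to hunt; and if she reaches the final round still with $\delta_0>\eps$, then $A^0=2\hat z^0-1$ yields $r_0=\delta_0>\eps\ge\tfrac{1-\eps}{n}$ (using $\eps\ge 8/n$, which $d\ge0$ forces). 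In every branch $\rinit\ge 0$, every other round contributes $0$, and one contributes $\ge\tfrac{1-\eps}{n}$.

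For soundness, $A\in\cA$ is fixed and $B$ is built as the Stackelberg follower. First consider the idealized estimator that commits probabilities bottom-up: at a leaf with index set $S$ it outputs $\horacle(\tilde u_{S})$, the honest oracle verdict on the proof bits $A$ would actually output; at an internal query $\hat y^k$ whose children under $A$'s decomposition carry idealized values $v_i$, it outputs $\hat p(\hat y^k)=\Pr_{z_i\sim\bern(v_i)}[M_D(x,y^k,z)=1]$. This estimator is exactly consistent, so $\bar p_k=p_k$ and $\E[r_k]=\E[r^kA^k(\bar p_k-p_k)]=0$ for every $k\ge 1$, while $p_0=\horacle(\tilde u_{S})$ forces $r_0=0$; hence $\E[R]=\E[\rinit]=r^{d+1}\E_x[\hat p(x)\1(A(x){=}1)+(1{-}\hat p(x))\1(A(x){=}0)]$. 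By \pref{def:recursive-decomposition} the bottom-up value $\hat p(x)$ at the root equals the verdict of the original verifier $M^{\horacle}$ on the implicit proof determined by $A$'s messages — the leaves are honest oracle evaluations and the top level of $M_D$ is $M^{\horacle}$ — and since $L\in\clso$, soundness of this verifier in both directions gives, pointwise, $\hat p(x)\1(A(x){=}1)+(1{-}\hat p(x))\1(A(x){=}0)\le\1(A(x)=L(x))$, so $\E[R]\le r^{d+1}\Pr[A(x)=L(x)]=\tfrac{\Pr[A(x)=L(x)]}{n}$. Computing $\hat p$ exactly expands the full width-$q$, depth-$d$ tree, so instead $B$ approximates every committed probability by Monte Carlo — recursively expanding $A$'s decomposition (``playing $A$ against herself'') and estimating expectations by sampling root-to-leaf paths — accurately enough (a Chernoff bound plus a union bound over the $\le 2^q$ outcome vectors per node and over the nodes touched) that Alice's total extra gain $\sum_k r^k|\bar p_k-p_k|$ from the residual inconsistencies, together with the perturbation of $\E[\rinit]$, stays below $\eta$. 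This fits inside the stated size $O(q2^{2q}nd^2/\eta^2)$, after hardwiring a random seed (a good one exists by averaging) so that $B$ is a deterministic oracle circuit, and yields the bound $\tfrac{\Pr[A(x)=L(x)]}{n}+\eta$.

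The step I expect to be the main obstacle is in completeness: the protocol reveals Bob's subclaim probabilities only along the single sampled path, so ``some $B_i^k$ is off by $\ge\eps$ at some prefix'' does not by itself let Alice locate such a subclaim to recurse into. Making the hunting argument rigorous requires choosing Alice's recursion rule so that a large induced error $\bar\delta_k$ at a round forces the sampled path to hit a large-discrepancy subclaim with non-negligible probability — using the $(\eps,1/2)$-stability promise precisely to rule out the error being spread thinly across all $q$ subclaims or concealed at a negligible-mass prefix — and then averaging over the protocol's internal randomness; this is where the width $q=O(c(8/\eps)^c)$ and the depth $d$ genuinely enter. A lesser obstacle, for soundness, is verifying that the idealized bottom-up value coincides with $M^{\horacle}$'s verdict even when a dishonest $A$ supplies a malformed decomposition, which relies on the exact structure in \pref{def:recursive-decomposition}.
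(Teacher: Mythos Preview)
Your completeness sketch tracks the paper's, but it has a protocol-level bug and an accounting gap. The bug: $A^k$ is committed in step~1 of each round, \emph{before} $B$'s subclaim probabilities $B_i^k$ are revealed in step~4(a), so Alice cannot branch on ``every $B_i^k$ is within $\epsilon$'' when setting $A^k$. The paper instead has Alice always play $A^k=\operatorname{sgn}(g(\hat y^k)-p_k)$ whenever $\delta_k>\epsilon$ and then recurse to $j=\argmax_i|B_i^k(z^k_{<i})-g(y_i^k)|$ along the sampled prefix. The accounting gap: in the hunting rounds before $B$ is caught, Alice is playing $A^k=\pm1$ and may collect negative reward, so ``every other round contributes $0$'' is false. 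The paper absorbs this with the geometric bound $\rinit+\sum_{k>l}r_k\ge -2r^{l+1}$, which the gain $r^l(\epsilon-\rho\epsilon)$ at the catching round $l$ then dominates because $r=\tfrac14\epsilon(1-\rho)$. The sampled-path issue you flag at the end is the right worry; the paper addresses it only at the level of the realized prefix.

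The serious gap is in soundness, and it is a missing idea rather than a missing detail. Your idealized bottom-up estimator is deterministic: the leaves are bits $\horacle(\tilde u_S)\in\{0,1\}$, so by induction every internal $\hat p$ is $M_D$ applied to bits and hence itself a bit --- there is nothing for Monte Carlo to average. Evaluating it at a depth-$k$ node needs all $q$ children's bits, so computing $B$'s committed probabilities along even one debate path already costs $\Theta(q^d)=\Theta(n^c)$ calls to $A$ and to $\horacle$, not $O(q2^{2q}nd^2/\eta^2)$. The paper's route is entirely different: $B$ never computes or approximates the truth. For each round one runs online gradient descent over $\Delta_{\{0,1\}^q}$, choosing as the $t$-th loss any size-$O(nq)$ circuit with $A$-oracle gates that still distinguishes the current iterate $h_t$ from the (inaccessible) ground-truth answer function $g$; the regret bound forces termination in $O(2^{2q}/\delta^2)$ steps, and the resulting circuit for $B$ simply wires together those $A$-oracle distinguishers --- the unknown $g$ enters only in \emph{which} distinguisher gets hard-coded at each step, not in evaluating $B$. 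Because the round reward $r_k$ and the one-step recurse-then-re-evaluate function both lie in this distinguisher class, indistinguishability directly yields $\E[r_k]\le2\delta$ for every $k$ and $\E[\rinit]\le r^{d+1}\Pr[A(x)=L(x)]+\delta$; summing with $\delta=\eta/(2d+1)$ gives the bound.
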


Notice that we only require that $L$ admits an $\cA$-provable, $\rho$-stable decomposition in the completeness case. This means that a dishonest debater $A$ may produce an incorrect decomposition into queries $y^k$ for some input $x$ such that $M_D(x,y^k,z^k)$ is not $\rho$-stable, with answers to queries that cannot be answered by any function in $\cA$. Therefore, our protocol must account for situations where $A$ produces an unstable decomposition into hard-to-answer queries, and ensure $B$ is still able to win.

We usually consider $q$ being a constant independent of $n$ and $q^{d} = O(n^c)$. Since the dominant per-round cost is $B$'s probability estimates, a single prover-estimator debate has a cost of $O(q 2^{3q}nd^3/\eta^2)$ queries to functions in $\cA$. When there exists a very stable decomposition $M_D$ ($\rho \ll q$), prover-estimator debate has a polynomial improvement over the cost $O(n^c)$ required to verify the full human-judgeable argument given by the transcript of $M^{\horacle}(x,y^*)$. In particular, doubly-efficient debate would require $O(n^c)$ queries to the debaters, and naive recursive debate would either fail to satisfy soundness, or otherwise require expanding the whole $O(n^c)$-size search tree.

The proof of \pref{thm:cut-and-choose} follows from the completeness lemma \pref{lem:completeness} and the soundness lemma \pref{lem:soundness}. The proof of completeness, that an honest debater $A$ can always defeat a dishonest debater $B$, follows from the intuition that human judgement is used directly in the last round, and the reward in each round increases by a factor of $1/r$. Thus, even if $B$ lies throughout the first $k$ rounds, at some point $B$ (or eventually the human judgement oracle) must tell the truth, at which point a debater $A$ that has always answered honestly can pick up a positive reward that is larger than the sum of all negative rewards received so far.

The key technical piece is in showing soundness: that regardless of the dishonest strategy employed by debater $A$, debater $B$ can gain an advantage while using only slightly more compute than $A$. We use methods from online convex optimization to construct probabilities for $B$ that are indistinguishable by $A$ from the truth. The idea is to run online gradient descent on a sequence of loss functions which capture how well on average $A$ can distinguish $B$'s probabilities from the truth. The convergence rate of online gradient descent then yields bounds on the complexity of $B$'s strategy relative to $A$'s.

As a corollary of \pref{thm:cut-and-choose}, we show that answering correctly is an $A$-leading Stackelberg equilibrium in the prover-estimator debate game. The equilibrium result is weaker than \pref{thm:cut-and-choose}, because it requires $\cA$-provability always rather than only for completeness (a definitional artifact).

\begin{theorem}
\label{thm:stackelberg-eq}
Let $\cA$ be any set of functions on $\{0,1\}^n$ and let $d$, $q$, $\cB$ be as in \pref{thm:cut-and-choose}, with $\eta = 
\frac{\epsilon}{\rho^{d+1}}$. Let $\mu$ be a distribution on inputs $x\in\{0,1\}^n$, and let $L \in \clso(n^c)$ be a language with a recursive decomposition given by $M_D$.
Let $\cG$ be the two-player zero-sum game defined by the protocol of \pref{fig:protocol-cut-and-choose} with decomposition $M_D$, and let $\alpha < \frac{\eps}{\rho^{d+1}}$. If $M_D$ is $\cA$-provable and $\rho$-stable each with probability $1-\frac{\eps}{\rho^{d+1}}$, then in every $\alpha$-approximate, $A$-leading Stackelberg equilibrium of $\cG$
\[
\Pr_{x\sim \mu}[A(x) = L(x)] > 1 - 5\eps.
\]
\end{theorem}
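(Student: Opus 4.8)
The plan is to pin down the leader's payoff in any $\alpha$-approximate, $A$-leading Stackelberg equilibrium of $\cG$ by bracketing it from both sides — a lower bracket coming from completeness (the leader may deviate to an honest prover) and an upper bracket coming from soundness (the follower may deviate to the estimator that the soundness argument constructs) — and then solving the resulting inequality for $\Pr_{x\sim\mu}[A(x)=L(x)]$. Throughout, $u_A(A,B)$ denotes the leader's expected payoff $\Ex[x\sim\mu]{V^{A,B}(x)}$ (averaging over protocol randomness as well), and $u_B=-u_A$ by zero-sumness.

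First, I would invoke \pref{thm:cut-and-choose} with $\gamma=\frac{\eps}{4n}$, whose completeness hypotheses are exactly those assumed here. This produces a fixed honest prover $A_{\mathrm{hon}}\in\cA$ such that for \emph{every} follower strategy $B$, with probability at least $1-2\gamma$ over $x\sim\mu$ the protocol-randomness expectation of $V^{A_{\mathrm{hon}},B}(x)$ is at least $\frac{1-\eps}{n}$; since $V\in[-1,1]$, averaging over $x$ gives $u_A(A_{\mathrm{hon}},B)\ge(1-2\gamma)\tfrac{1-\eps}{n}-2\gamma\ge\tfrac{1-2\eps}{n}$ for all $B\in\cB$. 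Taking $B$ to be the follower's best response to $A_{\mathrm{hon}}$ and applying the leader-optimality clause of the equilibrium $(A^*,B^*(\cdot))$ yields
\[
u_A(A^*,B^*(A^*))\ \ge\ u_A(A_{\mathrm{hon}},B^*(A_{\mathrm{hon}}))-\alpha\ \ge\ \tfrac{1-2\eps}{n}-\alpha\ >\ \tfrac{1-3\eps}{n},
\]
using $\alpha<\tfrac{\eps}{2n}$.

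Next, since the prover $A^*$ is now fixed, I would apply the soundness half of \pref{thm:cut-and-choose} to it to obtain an estimator $B_0\in\cB$ with $\Ex[x\sim\mu]{V^{A^*,B_0}(x)}\le\tfrac{\Pr[A^*(x)=L(x)]}{n}+\eta$. Since $B_0$ is a legal follower strategy, the follower-best-response clause gives $u_B(A^*,B^*(A^*))\ge u_B(A^*,B_0)-\alpha$, hence
\[
u_A(A^*,B^*(A^*))\ =\ -u_B(A^*,B^*(A^*))\ \le\ u_A(A^*,B_0)+\alpha\ \le\ \tfrac{\Pr[A^*(x)=L(x)]}{n}+\eta+\alpha.
\]
Chaining this against the lower bracket, $\tfrac{1-3\eps}{n}<\tfrac{\Pr[A^*(x)=L(x)]}{n}+\eta+\alpha$, i.e.\ $\Pr_{x\sim\mu}[A^*(x)=L(x)]>1-3\eps-n\eta-n\alpha$. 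Choosing the implied constant in $\eta=O(\eps/n)$ so that $n\eta\le\tfrac{\eps}{2}$ and using $n\alpha<\tfrac{\eps}{2}$ gives $n\eta+n\alpha<\eps$, and therefore $\Pr_{x\sim\mu}[A(x)=L(x)]>1-4\eps$, as claimed.

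The main obstacle — really the only non-mechanical point — is respecting the asymmetry of the two guarantees in \pref{thm:cut-and-choose}: completeness supplies \emph{one} honest $A_{\mathrm{hon}}$ that beats \emph{all} $B$, which is exactly the shape needed to lower-bound a \emph{leader's} payoff, whereas soundness supplies, for \emph{each fixed} $A$, \emph{one} good $B_0$, which is exactly the shape needed to upper-bound the same payoff through a \emph{follower's} deviation. The argument must therefore route completeness solely through the leader-optimality inequality and soundness solely through the follower-best-response inequality; swapping the roles would require a uniform-over-the-opponent guarantee we do not have (this is also why the equilibrium statement, unlike the soundness half of \pref{thm:cut-and-choose}, has no content without the $\cA$-provability hypothesis). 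The remaining ingredients — passing from the ``probability $\ge 1-2\gamma$ over $x$'' form of completeness to a plain expectation over $x$, and tracking the slack terms $\gamma,\alpha,\eta$ down to the target $4\eps$ — are routine since $V\in[-1,1]$.
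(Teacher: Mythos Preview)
The proposal is correct and follows essentially the same approach as the paper: lower-bound the equilibrium payoff via leader-deviation to the honest prover guaranteed by completeness, upper-bound it via follower-deviation to the estimator guaranteed by soundness, and chain. The only cosmetic difference is that the paper invokes the underlying lemmas (with the raw parameters $r^{d+1}$ and $\delta$) rather than the packaged \pref{thm:cut-and-choose}, and your explicit discussion of why the $\forall/\exists$ quantifier shapes of completeness and soundness force this routing is, if anything, clearer than the paper's presentation.
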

We prove \pref{thm:stackelberg-eq} in \pref{sec:stackelberg-proof}. 

\subsection{Consequences for training with prover-estimator debate.}
The key take-away of our two main theorems is that one can train debaters in the prover-estimator debate protocol via standard methods that converge to Stackelberg equilibria in zero-sum games. If the validity of claims in the debate is not sensitive to a small set of evidence ($\rho$-stability) and the AI debaters only make claims they actually know the answer to ($\cA$-provability), then \pref{thm:stackelberg-eq} implies $A$ will produce the correct answer with probability at least $1 - 5\eps$.
On the other hand, if it is not possible for the AI's to decompose their arguments into subclaims which are both supported by sufficient evidence and which the AI's can actually answer, then the soundness case of \pref{thm:cut-and-choose} implies that the payoff to $A$ will fall below a known threshold. 
Thus, one can read off whether training has succeeded by looking at $A$'s expected payoff, and in the case of success, $A$ outputs the correct answer with probability $1 - 5\eps$.
We provide further recommendations for empirical debate research in \pref{sec:recommendations}.

\section{Technical Overview}
\label{sec:technical-overview}
In this section we summarize the main ingredients in the proof of \pref{thm:cut-and-choose}. 
We begin with a more detailed discussion of what goes wrong with naive recursive debate, and an idealized model of the obfuscated arguments problem.
\paragraph{Obfuscated arguments with naive recursion.}
In naive recursive debate, the debater $A$ decomposes the initial problem of deciding whether $x\in L$ into subproblems $y_1,\dots,y_q$ along with claimed answers $z_i = A(y_i)$ to each subproblem. Debater $B$ then selects a subproblem $y_i$, and the protocol recurses with $A$ claiming that $A(y_i) = z_i$.
The obfuscated arguments problem refers to the fact that $A$ may choose a decomposition $y_1,\dots,y_q$, and answers $z_i = A(y_i)$, such that only one of the answers is wrong, but it is computationally intractable for both $A$ and $B$ to determine which one.

A concrete example of obfuscation is primality testing~\citep{barnes2020obfuscated}. In this case, the initial question is whether an integer $x$ is prime or composite. The Miller-Rabin primality test can efficiently solve this problem, but there is no known polynomial-time algorithm to factor $x$ \citep{miller1975riemann,rabin1980probabilistic}. Indeed, if $x$ is the product of two randomly chosen $n$-bit primes, the RSA cryptosystem's security relies on the inability to efficiently factor $x$ \citep{rivest1978method}.
Hence, $A$ can employ the following obfuscating strategy in the case that $x$ is a hard-to-factor product of two primes. $A$ decomposes the integers from 1 to $x$ into $q$ intervals $y_1,\dots,y_q$, and claims that $x$ has no factors in each interval. When $B$ chooses an interval $y_i$ for recursion, $A$ further decomposes this into $q$ subintervals and so on. For $B$ to correctly identify the flaw in $A$'s argument, $B$ would need to find a factor of $x$, which is computationally intractable. This holds even though both $A$ and $B$ can run the Miller-Rabin test and be confident a factor exists. In summary, $A$ has efficiently decomposed the problem such that both debaters know there is a flaw somewhere in $A$'s argument, but neither can efficiently locate the flaw.

\paragraph{Soundness via indistinguishability.}
Prover-estimator debate aims to mitigate the obfuscated arguments problem using computational indistinguishability. Debater $A$ decomposes the problem $x$ into subproblems $y_1,\dots,y_q$, just as in naive recursive debate. Then debater $B$ is allowed to assign a joint probability distribution $\pq$ for the truth value of each subclaim $y_i$. Obfuscated arguments shows that $B$ cannot count on being able to efficiently solve the subproblems $y_i$. Instead, we show that $B$ can efficiently produce probabilities $\pq$ that are computationally indistinguishable from the correct answer by $A$. Our notion of efficiency is that $B$'s strategy can be computed by a small circuit making queries to $A$'s strategy at each gate.
The notion of indistinguishability that we use in the proof is known as \emph{outcome indistinguishability} \citep{dwork2021outcome}, and intuitively requires that, \emph{on average over the input distribution}, $A$ cannot tell the difference between a sample from the ground-truth answers and a sample from $B$'s chosen probabilities $\pq$. See \pref{sec:indistinguishability} for the formal definition.
To ensure that $A$ has advantage at most $\delta$ in distinguishing, the size of $B$'s circuit must grow as $O(q2^{3q}/\delta^2)$.
The exponential factor in $q$ appears because this type of indistinguishability requires explicitly representing $B$'s probability distribution over the $2^q$ possible answers for the $q$ subclaims at each level. Hence, we will require that $q$ is a constant independent of the input size $n$.

It is instructive to see how indistinguishability helps for a particularly simple type of obfuscated argument by $A$. Suppose that $A$ decomposes $x$ into $q$ subproblems so that there is one subproblem $y_i$ where the correct answer is $z_i=0$, but neither $A$ nor $B$ can identify this subproblem any better than by randomly guessing its location. In this case, $B$ can output a probability of $1/q$ that each answer $z_i = 0$. If $A$ chooses a subproblem $y_i$ to recurse on, $A$ will in all likelihood choose one where the true answer is 1, hence $B$'s probability is only off by $1/q$. If $A$ again obfuscates in the next round of the prover-estimator debate, then $B$'s probability will be off by at most $1/q^2$. Thus, if we reward $A$ proportional to the magnitude of the error in $B$'s probability, $A$ will receive a very small reward.

In the general case, prover-estimator debate gives rewards in every round of the debate. Prior to the initial round, $A$ receives expected reward proportional to how close $B$'s initial probability estimate is to $A$'s initial claim about the input. In all but the last round, the rewards are proportional to any error in the consistency between $B$'s probability claimed in the current round, versus the previous round. However, $A$ only receives these rewards if $A$ has correctly identified the direction of $B$'s error. In the final round, $B$'s probabilities are compared to human judgement, and again $A$ only receives a reward proportional to the magnitude of $B$'s error if $A$ correctly predicted the error direction. Hence, $B$ can use indistinguishability to ensure that $A$ gains rewards at most $\delta$ in each round, at the price of using strategies with circuit size $O(q2^{3q}/\delta^2)$. The only remaining source of reward for $A$ then comes from the initialization, where $A$ receives rewards depending on how closely $A$ agrees with $B$'s initial probability estimate $p$. Indistinguishability then implies that the fraction of inputs on which $A$ agrees with $B$'s initial estimate $p$ is approximately equal to the fraction of inputs where $A$ agrees with the ground truth. Thus, by using more compute $B$ can ensure that $A$'s rewards are bounded by a value proportional to the probability that $A$ agrees with the ground truth on the initial problem.

\paragraph{Completeness via stability.}
For the soundness guarantees upper-bounding $A$'s rewards to be meaningful, we must also show that if $A$ tells the truth, it is possible to lower-bound the rewards received by $A$. Completeness is achieved via two key properties. First, we will require $\rho$-stability of the decomposition, and second, the magnitude of available rewards will grow by a factor of $1/r$ in each round, where $r = 1/\rho$. To see why we need both properties, consider the following sketch of a completeness argument. Suppose $A$ always correctly answers the initial input problem and all subclaims, and that $A$ always selects the marginal probability from $B$ that is furthest from the truth. Then if $B$ ever outputs an incorrect probability, it will eventually lead to $B$ being caught: either in the final round when comparing to human judgement, or in some intermediate round where $B$ switches to telling the truth, and thus outputs an inconsistent probability.
The reason for the rewards to grow by a factor of $1/r$ in each round is to ensure that no matter what happened in the prior rounds, the cumulative negative rewards received earlier by $A$ are outweighed by the reward in the round where $B$ is finally caught in an inconsistency or by the human-judgement comparison.

The reason for the stability requirement arises from the need to check for inconsistencies between $B$'s probability for the query $\hat{y}$ selected for recursion in the previous round, when compared to $B$'s probabilities for the subclaims $y_i$ in the current round. In the worst case, very small changes in $B$'s probabilities for the subclaims could cause large changes in overall outcome for the query $\hat{y}$. Thus, even if $B$'s probability estimate for $\hat{y}$ was off by a lot, $B$ could produce estimates for the $y_i$ that are individually only slightly off (say $O(1/q)$), but are approximately consistent with the original estimate for $\hat{y}$. This would mean that, even when $A$ told the truth and $B$'s estimates of the initial probability were far from correct, $B$ could shift to approximately correct estimates while paying a small cost in rewards to $A$ for the consistency checks. Stability prevents this failure mode, because it requires that small errors in probability for the $y_i$ have a small impact on the outcome for $\hat{y}$.

Put another way, stability limits how rapidly errors can grow in the debate tree. This can also be interpreted as on average limiting the total search space necessary to determine correctness, because even errors of magnitude $\epsilon > 1/q$ cannot change the overall outcome by much, so it is not information-theoretically necessary to search the full debate tree of size $q^d$ to determine correctness.

\paragraph{Ensuring large reward gap.}

This outlined argument ensures $A$ receives some minimal reward when telling the truth, and that $B$ can spend more compute to ensure that $A$ receives an amount $\Delta$ less reward if $A$ frequently lies. However, $\Delta > 0$ is insufficient for efficient debate training. To see why, observe that if we run naive recursive debate with $q$ subclaims per round and $d$ rounds, we achieve $\Delta = 1/q^d$ if $B$ uniformly randomly selects subclaims for recursion. This does not enable efficient training, as in expectation one needs to run $q^d$ debates before seeing a single one where $B$ catches a flaw. In contrast, \pref{thm:cut-and-choose} obtains $\Delta = \Omega(1/\rho^d)$ and one need only see $O(\rho^d)$ debates before getting a non-trivial training signal, where $\rho$ can be significantly smaller than $q$ for stable decompositions.

\section{Discussion and Open Problems}
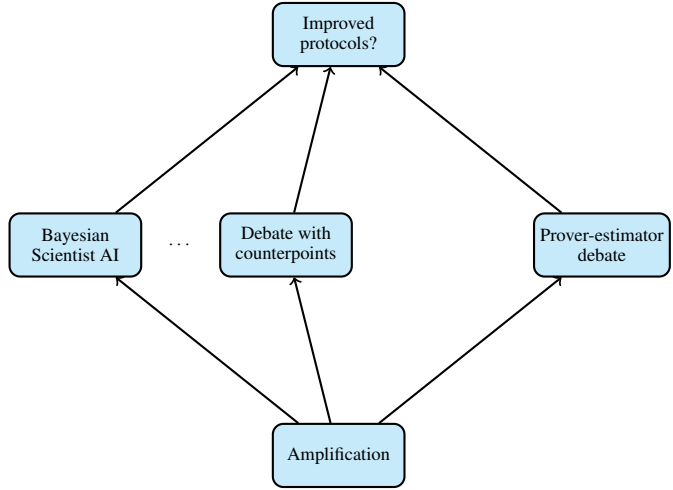
\begin{figure}[ht!]
\centering
\begin{tikzpicture}[
    scale=0.7, transform shape,
    protocol_node/.style={rectangle, rounded corners, draw, fill=cyan!20!white, minimum width=2.5cm, minimum height=1.2cm, align=center},
    thick
]

\node[protocol_node] (improved) at (0, 4) {Improved\\protocols?};

\node[protocol_node] (amplification) at (0, -4) {Amplification};

\node[protocol_node] (probabilistic) at (-5, 0) {Bayesian\\Scientist AI};

\node[protocol_node] (debate) at (-1, 0) {Debate with\\counterpoints};

\node[protocol_node] (prover) at (5, 0) {Prover-estimator\\debate};

\node[font=\normalsize] at (-3, 0) {$\cdots$};

\draw[->] (probabilistic) -- (improved);
\draw[->] (debate) -- (improved);
\draw[->] (prover) -- (improved);
\draw[->] (amplification) -- (probabilistic);
\draw[->] (amplification) -- (debate);
\draw[->] (amplification) -- (prover);

\end{tikzpicture}
\caption{While prover-estimator debate mitigates the obfuscated arguments problem that occurs in amplification and other scalable oversight protocols, it does not have all the features of these other protocols. For example, it lacks the argument/counterargument structure of \cite{irving2018ai} and the Bayesian structure of \cite{bengio2025scientist}. We hope that future work can explore whether the core ideas needed to avoid obfuscated arguments can be combined with features of other protocols.}
\label{fig:improved-protocols}
\end{figure}

Our paper presents debate protocols which circumvent the obfuscated arguments problem in recursive debate. Intuitively, this demonstrates that it is possible to design the rules of a debate such that training to win at debate leads to correct behavior of the final trained models in most cases.
These theoretical results lead to qualitative recommendations for empirical debate research, which present interesting avenues for future research in both of these directions.

\paragraph{Theoretical.} 
\Znote{This part needs to be revised}\Jnote{Done. What do you think?}
Our results identify stability as a critical requirement for debate training to succeed, and hence we need to better understand when stability can be achieved, either in theory for interesting classes of problems, or in practice for debates in messy, unformalized settings. On the theoretical side, \pref{thm:cut-and-choose} requires stability for completeness but not soundness, and thus is safe to use even if one is not confident in stability. 
However, it can still be the case that the prover only knows how to produce an unstable decomposition into solvable subproblems, implying that certain problems cannot be accurately judged via our protocol even when both debaters know the correct answer. Thus, an interesting direction for future work is to understand whether the stability condition is necessary in this setting. An optimistic hope might be that all relevant problems can be divided into two classes: those that require very precise dependence on individual subclaims (e.g. mathematical proofs) and those that do not (e.g. scientific knowledge obtained by noisy empirical evidence). If this were the case, future debate protocols might attempt to handle each type of claim separately. For example, one could require the debaters to produce machine-verifiable proofs in Lean for all  mathematical subclaims, and use prover-estimator debate to verify fuzzier subclaims. Theoretical progress in this direction depends on precisely characterizing the structure of problems that admit stable decompositions.

Another limitation is that our results hold in an average-case rather than a worst-case sense: we do not prove the judge is always able to tell which debater is correct on every question. Rather, we show that the payoffs given by the judge incentivize the debaters to converge to answering correctly. 
Thus, the  payoff assigned by the judge is a random variable that is correlated with correctness, but imperfectly so. 
We believe such an average-case analysis is necessary, and formalizing this is an interesting open question. We are also curious whether there are corresponding lower bounds to $B$'s circuit size if the stability parameter $\rho$ is held constant, in either the oracle or fine-grained complexity settings.

\paragraph{Empirical.}
The central empirical question raised by our paper is how to incorporate the qualitative recommendations from our theoretical results into practical debate training experiments with LLMs.
Such experiments may require additional flexibility for training to succeed, so we enumerate the key qualitative properties of prover-estimator debate that could guide practical debate experiments:
\begin{enumerate}
\item The prover $A$ both decomposes problems and chooses where to recurse given $B$'s estimates. The estimator $B$ only needs to express reasonable uncertainty about where a mistake might lie.
\item There is a per-round reward based on a consistency check between $B$’s estimate from the last round for the current main claim, and estimates for claims in the current round. $A$ can only collect this reward by accurately predicting the direction of $B$'s error between rounds.
\item In the last round, the judge directly decides if $B$’s estimates are correct.
\item The debaters should be trained to converge to an $A$-leading Stackelberg equilibrium.
\end{enumerate}

\paragraph{Zooming out.}
The broad question is whether some variant of debate works as a component of an overall alignment strategy. \cite{buhl2025alignment} sketches what an overall safety case using debate might look like, combining both theoretical and empirical evidence. We believe prover-estimator debate is a useful contribution on the theory side, but hope that future work finds improved protocols that combine the advantages of other scalable oversight methods (\pref{fig:improved-protocols}). In particular, we are interested in protocols which mitigate systematic errors in human input~\citep{irving2025systematic} and the potential for debaters to hide information in the subclaims $y_k$~\citep{pfau2025search}. These are just two areas for improvement: the full safety case sketch has a lot more holes to fill.
\section*{Acknowledgments}
We would like to thank Beth Barnes for good probing on the stability requirement, Marie Buhl for discussions relating this work and broader debate safety cases, Benjamin Hilton, Jacob Pfau, Zachary Kenton, Simon Marshall, Jacob Hilton, and Mario Szegedy for detailed comments on the paper, and Lean for refusing to buy the second author’s incomplete understandings of earlier versions of the proof.
JL is supported by Scott Aaronson's Coefficient Giving grant. 
ZX is supported by the NSF AI Institute for Foundations of Machine Learning (IFML).

\bibliographystyle{plainnat}
\bibliography{standard}

@article{irving2018ai,
      title={{AI} safety via debate}, 
      author={Geoffrey Irving and Paul Christiano and Dario Amodei},
      journal={arXiv preprint arXiv:1805.00899},
      year={2018},
      eprint={1805.00899},
      archivePrefix={arXiv},
      primaryClass={stat.ML}
}

@article{waldchen2022merlin,
  title={{Merlin-Arthur} Classifiers: Formal Interpretability with Interactive Black Boxes},
  author={W{\"a}ldchen, Stephan and Sharma, Kartikey and Zimmer, Max and Pokutta, Sebastian},
  journal={arXiv preprint arXiv:2206.00759},
  year={2022}
}

@article{leike2018scalable,
  title={Scalable agent alignment via reward modeling: a research direction},
  author={Leike, Jan and Krueger, David and Everitt, Tom and Martic, Miljan and Maini, Vishal and Legg, Shane},
  journal={arXiv preprint arXiv:1811.07871},
  year={2018}
}

@article{christiano2018supervising,
  title={Supervising strong learners by amplifying weak experts},
  author={Christiano, Paul and Shlegeris, Buck and Amodei, Dario},
  journal={arXiv preprint arXiv:1810.08575},
  year={2018}
}

@misc{barnes2020obfuscated,
  author = {Barnes, Beth},
  title = {Debate Update: Obfuscated Arguments Problem},
  year = 2020,
  url = {https://www.alignmentforum.org/posts/PJLABqQ962hZEqhdB/debate-update-obfuscated-arguments-problem},
  urldate = {2020-12-23}
}

@inproceedings{feige1997making,
  title={Making games short},
  author={Feige, Uriel and Kilian, Joe},
  booktitle={Proceedings of the twenty-ninth annual ACM symposium on Theory of computing},
  pages={506--516},
  year={1997}
}

@inproceedings{khan2024debating,
  title={Debating with More Persuasive {LLMs} Leads to More Truthful Answers},
  author={Khan, Akbir and Hughes, John and Valentine, Dan and Ruis, Laura and Sachan, Kshitij and Radhakrishnan, Ansh and Grefenstette, Edward and Bowman, Samuel R and Rockt{\"a}schel, Tim and Perez, Ethan},
  booktitle={Forty-first International Conference on Machine Learning},
  year={2024}
}

@article{michael2023debate,
  title={Debate helps supervise unreliable experts},
  author={Michael, Julian and Mahdi, Salsabila and Rein, David and Petty, Jackson and Dirani, Julien and Padmakumar, Vishakh and Bowman, Samuel R},
  journal={arXiv preprint arXiv:2311.08702},
  year={2023}
}

@inproceedings{zinkevich2003online,
  title={Online convex programming and generalized infinitesimal gradient ascent},
  author={Zinkevich, Martin},
  booktitle={Proceedings of the 20th International Conference on Machine Learning},
  pages={928--936},
  year={2003}
}

@article{hazan2016introduction,
  title={Introduction to online convex optimization},
  author={Hazan, Elad and others},
  journal={Foundations and Trends in Optimization},
  volume={2},
  number={3-4},
  pages={157--325},
  year={2016},
  publisher={Now Publishers, Inc.}
}

@InProceedings{brown-cohen24scalable,
  title = 	 {Scalable {AI} Safety via Doubly-Efficient Debate},
  author =       {Brown-Cohen, Jonah and Irving, Geoffrey and Piliouras, Georgios},
  year = 	 {2024},
  booktitle={Forty-first International Conference on Machine Learning},
}

@inproceedings{kentonscalable,
  title={On scalable oversight with weak {LLMs} judging strong {LLMs}},
  author={Kenton, Zachary and Siegel, Noah Yamamoto and Kramar, Janos and Brown-Cohen, Jonah and Albanie, Samuel and Bulian, Jannis and Agarwal, Rishabh and Lindner, David and Tang, Yunhao and Goodman, Noah and others},
  booktitle={Forty-first International Conference on Machine Learning},
  year={2024}
}

@article{canetti2013refereed,
  title={Refereed delegation of computation},
  author={Canetti, Ran and Riva, Ben and Rothblum, Guy N},
  journal={Information and Computation},
  volume={226},
  pages={16--36},
  year={2013},
  publisher={Elsevier}
}

@article{amit2024models,
  title={Models That Prove Their Own Correctness},
  author={Amit, Noga and Goldwasser, Shafi and Paradise, Orr and Rothblum, Guy},
  journal={arXiv preprint arXiv:2405.15722},
  year={2024}
}

@article{anil2021learning,
  title={Learning to give checkable answers with prover-verifier games},
  author={Anil, Cem and Zhang, Guodong and Wu, Yuhuai and Grosse, Roger},
  journal={arXiv preprint arXiv:2108.12099},
  year={2021}
}

@article{kirchner2024prover,
  title={Prover-verifier games improve legibility of {LLM} outputs},
  author={Kirchner, Jan Hendrik and Chen, Yining and Edwards, Harri and Leike, Jan and McAleese, Nat and Burda, Yuri},
  journal={arXiv preprint arXiv:2407.13692},
  year={2024}
}

@article{bengio2025scientist,
  title={Superintelligent Agents Pose Catastrophic Risks: Can Scientist {AI} Offer a Safer Path?},
  author={Bengio, Yoshua and Cohen, Michael and Fornasiere, Damiano and Ghosn, Joumana and Greiner, Pietro and MacDermott, Matt and Mindermann, S{\"o}ren and Oberman, Adam and Richardson, Jesse and Richardson, Oliver and others},
  journal={arXiv preprint arXiv:2502.15657},
  year={2025}
}

@inproceedings{Duchi2008EfficientPO,
  title={Efficient projections onto the l1-ball for learning in high dimensions},
  author={John C. Duchi and Shai Shalev-Shwartz and Yoram Singer and Tushar Chandra},
  booktitle={International Conference on Machine Learning},
  year={2008},
  url={https://api.semanticscholar.org/CorpusID:1226433}
}

@inproceedings{dwork2021outcome,
  title={Outcome indistinguishability},
  author={Dwork, Cynthia and Kim, Michael P and Reingold, Omer and Rothblum, Guy N and Yona, Gal},
  booktitle={Proceedings of the 53rd Annual ACM SIGACT Symposium on Theory of Computing},
  pages={1095--1108},
  year={2021}
}

@inproceedings{Dwork2022BeyondBG,
  title={Beyond {Bernoulli}: Generating Random Outcomes that cannot be Distinguished from Nature},
  author={Cynthia Dwork and Michael P. Kim and Omer Reingold and Guy N. Rothblum and G. Yona},
  booktitle={International Conference on Algorithmic Learning Theory},
  year={2022},
  url={https://api.semanticscholar.org/CorpusID:247682041}
}

@article{chandra1981alternation,
author = {Chandra, Ashok K. and Kozen, Dexter C. and Stockmeyer, Larry J.},
title = {Alternation},
year = {1981},
issue_date = {Jan. 1981},
publisher = {Association for Computing Machinery},
address = {New York, NY, USA},
volume = {28},
number = {1},
issn = {0004-5411},
url = {https://doi.org/10.1145/322234.322243},
doi = {10.1145/322234.322243},
journal = {J. ACM},
month = jan,
pages = {114–133},
numpages = {20}
}

@inproceedings{hammondneural,
  title={Neural Interactive Proofs},
  author={Hammond, Lewis and Adam-Day, Sam},
  booktitle={The Thirteenth International Conference on Learning Representations},
  year={2025}
}

@inproceedings{miller1975riemann,
  title={Riemann's hypothesis and tests for primality},
  author={Miller, Gary L},
  booktitle={Proceedings of the seventh annual ACM symposium on Theory of computing},
  pages={234--239},
  year={1975}
}

@article{rabin1980probabilistic,
  title={Probabilistic algorithm for testing primality},
  author={Rabin, Michael O},
  journal={Journal of number theory},
  volume={12},
  number={1},
  pages={128--138},
  year={1980},
  publisher={Elsevier}
}

@article{rivest1978method,
  title={A method for obtaining digital signatures and public-key cryptosystems},
  author={Rivest, Ronald L and Shamir, Adi and Adleman, Leonard},
  journal={Communications of the ACM},
  volume={21},
  number={2},
  pages={120--126},
  year={1978},
  publisher={ACM New York, NY, USA}
}

@article{buhl2025alignment,
  title={An alignment safety case sketch based on debate},
  author={Buhl, Marie Davidsen and Pfau, Jacob and Hilton, Benjamin and Irving, Geoffrey},
  journal={arXiv preprint arXiv:2505.03989},
  year={2025}
}

@misc{irving2025systematic,
  author = {Irving, Geoffrey},
  title = {Dodging systematic human errors in scalable oversight},
  year = 2025,
  url = {https://www.alignmentforum.org/posts/EgRJtwQurNzz8CEfJ/dodging-systematic-human-errors-in-scalable-oversight},
  urldate = {2025-5-14}
}

@misc{pfau2025search,
  author = {Pfau, Jacob and Irving, Geoffrey},
  title = {Unexploitable search: blocking malicious use of free parameters},
  year = 2025,
  url = {https://www.alignmentforum.org/posts/CuneN5HmLnztsLRzD/unexploitable-search-blocking-malicious-use-of-free-1},
  urldate = {2025-5-21}
}

@article{chen2023playing,
  title={Playing Large Games with Oracles and {AI} Debate},
  author={Chen, Xinyi and Chen, Angelica and Foster, Dean and Hazan, Elad},
  journal={arXiv preprint arXiv:2312.04792},
  year={2023}
}

@article{KulkarniTal2016fbs,
  author  = {Raghav Kulkarni and Avishay Tal},
  title   = {On Fractional Block Sensitivity},
  journal = {Chicago Journal of Theoretical Computer Science},
  volume  = {2016},
  number  = {8},
  pages   = {1--16},
  year    = {2016},
  doi     = {10.4086/cjtcs.2016.008}
}
\newpage
\appendix
\onecolumn
\section{Proof of \pref{thm:cut-and-choose}}
\label{sec:proof}
Fix a strategy $A$ for the leader. Our approach is to construct a small circuit implementing $B$'s strategy via online gradient descent on a sequence of loss functions chosen so that $B$'s probabilities will be indistinguishable by $A$ from the truth.
In each iteration of online gradient descent the loss is chosen by measuring, averaged over all queries, how well $A$ can distinguish between $B$'s current strategy and the true answer. 

Crucially, \emph{our online gradient descent algorithm cannot be run}: each step of gradient descent requires access to the truth, which is available only with exponential compute. However, we will show that the resulting circuit $B$ has complexity bounded in terms of \emph{number of iterations}, which will be small. 

\subsection{Indistinguishability}
\label{sec:indistinguishability}
We begin with a formal definition of the notion of indistinguishability that we will use in our proof.
\begin{definition}
\label{def:indistinguishability}
    Let $X$ be a set, $Z$ be a finite set, $\mu$ be a finitely supported probability distribution on $X$, and $g:X \to \Delta_Z$.
    Let $\func$ be a finite family of functions $f:X\times Z\times \Delta_Z \to [-1,1]$.
    A function $h:X\to \Delta_Z$ is \emph{$(\delta,\func)$-indistinguishable} from $g$ if and only if for all $f\in\func$
    \[
        \Abs{\Ex[\substack{x \sim \mu\\ z\sim g(x)}]{f(x,z,h(x))} - \Ex[\substack{x \sim \mu\\z\sim h(x)}]{f(x,z,h(x))}} < \delta
    \]
\end{definition}
Intuitively, \pref{def:indistinguishability} says that test-functions from the class $\func$ cannot tell the difference between $h$ and $g$.
Our notion of indistinguishability is nearly equivalent to a multi-class variant of \emph{outcome indistinguishability} \citep{dwork2021outcome}, with the only difference being that we use real-valued bounded test functions, rather than functions taking values in $\{0,1\}$. The work of \cite{Dwork2022BeyondBG} further provides an algorithm for constructing multi-class outcome indistinguishability in the $\{0,1\}$-valued test function setting. However, due to a slightly different focus, \cite{Dwork2022BeyondBG} achieve a linear dependence on $\Abs{Z}$ in the complexity of $h$, at the cost of an $O(\log\Abs{\cA})$ factor. In our setting $\Abs{Z} = 2^q$ is small, while $\log\Abs{\cA}$ is as large as the circuit complexity (or number of weights in the neural network) of the debaters. Thus, we design a different algorithm than that of \cite{Dwork2022BeyondBG} which has quadratic dependence on $\Abs{Z}$, but is independent of $\Abs{\cA}$.

Our main technical lemma uses online gradient descent to produce a $(\delta,\func)$-indistinguishable function $h$ that has bounded complexity relative to $\func$. In particular, $h$ will be computable by a small circuit that at each gate may query a function $f\in\func$.
The result will follow from online gradient descent, where in each iteration we select a loss function determined by some $f\in\func$ such that the condition of \pref{def:indistinguishability} does not hold.

We recall the standard guarantees of online gradient descent \cite{zinkevich2003online, hazan2016introduction}.
\begin{theorem}[Online Gradient Descent]
\label{thm:online-gradient-descent}
Let $K$ be a convex set, $D$ be the diameter of $K$, and $G$ an upper bound on $\norm{\nabla l_t(v)}_2$ for $v\in K$. Then for each $v\in K$, online gradient descent with step sizes $\eta_t = \frac{D}{G\sqrt{t}}$ satisfies
\[
    \textrm{regret} = \frac{1}{T}\sum_{t=1}^T l_t(v_t) - \frac{1}{T}\sum_{t=1}^Tl_t(v) \leq \frac{3DG}{2\sqrt{T}}
\]
where the loss $l_t$ can be chosen adversarially after $v_t$ is revealed.
\end{theorem}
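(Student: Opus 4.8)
This is the standard Zinkevich-style analysis of projected online gradient descent, so the plan is to follow the classical potential/telescoping argument, being careful only about the time-varying step sizes $\eta_t = \frac{D}{G\sqrt t}$. Write $v_{t+1} = \Proj_K\!\left(v_t - \eta_t \nabla_t\right)$ where $\nabla_t = \nabla l_t(v_t)$, and fix an arbitrary comparator $v \in K$. The first step is to reduce the regret to a sum of linear terms via convexity of each $l_t$: $l_t(v_t) - l_t(v) \le \iprod{\nabla_t, v_t - v}$. So it suffices to bound $\sum_{t=1}^T \iprod{\nabla_t, v_t - v}$ by $\tfrac{3}{2}DG\sqrt T$ and then divide by $T$.

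Next I would control each linear term using the update rule together with non-expansiveness of the Euclidean projection onto the convex set $K$ (here is where convexity of $K$ is used): since $v \in K$,
\[
\Snorm{v_{t+1} - v} \le \Snorm{v_t - \eta_t\nabla_t - v} = \Snorm{v_t - v} - 2\eta_t\iprod{\nabla_t, v_t - v} + \eta_t^2\Snorm{\nabla_t}.
\]
Rearranging and using $\Snorm{\nabla_t} \le G^2$ gives $\iprod{\nabla_t, v_t - v} \le \frac{\Snorm{v_t-v} - \Snorm{v_{t+1}-v}}{2\eta_t} + \frac{\eta_t G^2}{2}$. Summing over $t=1,\dots,T$, the first group of terms is handled by an Abel/summation-by-parts step: because $1/\eta_t$ is nondecreasing and $\Snorm{v_t - v} \le D^2$ for all $t$ (both points lie in $K$, whose diameter is $D$), the sum telescopes to at most $\frac{D^2}{2\eta_T}$. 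The second group is $\frac{G^2}{2}\sum_{t=1}^T \eta_t = \frac{DG}{2}\sum_{t=1}^T \frac{1}{\sqrt t} \le \frac{DG}{2}\cdot 2\sqrt T = DG\sqrt T$, using the elementary bound $\sum_{t=1}^T t^{-1/2} \le 2\sqrt T$.

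Finally I would substitute $\eta_T = \frac{D}{G\sqrt T}$ into $\frac{D^2}{2\eta_T}$ to get $\frac{DG\sqrt T}{2}$, so that the two contributions combine to $\sum_{t=1}^T \iprod{\nabla_t, v_t - v} \le \frac{DG\sqrt T}{2} + DG\sqrt T = \frac{3DG\sqrt T}{2}$; dividing by $T$ yields the claimed $\frac{3DG}{2\sqrt T}$ regret bound. The only mildly delicate point — and hence the step I would be most careful about — is the summation-by-parts for the telescoping term with nonconstant $\eta_t$: one must check that the coefficients $\frac{1}{2\eta_t} - \frac{1}{2\eta_{t-1}}$ are nonnegative so that bounding each $\Snorm{v_t - v}$ by $D^2$ is valid, and that the boundary term is exactly $\frac{D^2}{2\eta_T}$ (the adversarial choice of $l_t$ after $v_t$ is revealed plays no role beyond the convexity bound, since nothing in the argument exploits the structure of $l_t$). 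Everything else is routine, and the adversarial/online aspect is automatically accommodated because the inequalities above hold pointwise in $t$.
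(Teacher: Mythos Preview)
Your argument is correct and is exactly the standard Zinkevich analysis (convexity to linearize, projection non-expansiveness, Abel summation on the telescoping term with nondecreasing $1/\eta_t$, and the integral bound $\sum_{t\le T} t^{-1/2}\le 2\sqrt T$). The paper does not actually give a proof of this theorem: it is stated as a known result with citations to \cite{zinkevich2003online,hazan2016introduction} and then invoked as a black box inside the proof of \pref{lem:indistinguishability}. So there is nothing to compare against; your write-up simply supplies the standard proof that the paper omits.
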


\begin{algorithm}[htbp]
\caption{Constructing an indistinguishable $h$}\label{alg:indistinguishability}
\begin{algorithmic}[1]
\State{\textbf{Input:} Finite sets $X$ and $Z$, a distribution $\mu$ on $X$, a function $g:X\to \Delta_Z$, a class $\func$ of functions $f:X\times Z \times \Delta_Z\to [-1,1]$, and accuracy parameter $\delta > 0$.} 
\State{\textbf{Output:} A function $h:X \to \Delta_Z$ that is $(\delta,\mathcal F)$-indistinguishable from $g$.}
\State{Online gradient descent with each loss given by a function $f\in \func$ that can distinguish $h$ from $g$.}
\State{Initialize $h_1:X\to \Delta_Z$ to $h_1 \equiv \frac{1}{\Abs{Z}}$} \Comment{Initialize estimates to the uniform distribution.}
\For{$t = 1$ to $T$}
    \State{For any $f\in \func$ let \[\Adv^t_f = \Ex[\substack{x \sim \mu\\ z\sim g(x)}]{f(x,z,h_t(x))} - \Ex[\substack{x \sim \mu\\z\sim h_t(x)}]{f(x,z,h_t(x))}\]}
    \If{there exists $f\in\func$ such that $\Abs{\Adv^t_{f}} \geq \delta$} \Comment{If $f$ can distinguish $h$ from $g$.}
        \State{$\displaystyle h_{t+1}(x,z) = h_t(x,z) + \eta_t\cdot \sign \Adv^t_{f}\cdot f(x,z,h_t(x))~\forall ~x,z$. \Comment{Gradient step with $f$.}}
        \State{Project all outputs of $h_{t+1}$ to $\Delta_Z$. \Comment{Project to probabilities.}}
    \Else{}
        \State{\Return{$h_t$}}
    \EndIf
\EndFor
\end{algorithmic}
\end{algorithm}

\begin{lemma}[Indistinguishability]
\label{lem:indistinguishability}
Let $X$ be a set, $Z$ be a finite set, $\mu$ be a finitely supported distribution on $X$, and $\func$ a set of functions $f : X \times Z \times \Delta_Z \to 
[-1,1]$. For any $g:X \to \Delta_Z$ there exists a function $h:X\to \Delta_Z$ such that
\begin{enumerate}
    \item $h$ is $(\delta,\func)$-indistinguishable from $g$.
    \item $h$ is computable by a circuit of size $O(\Abs{Z}^2/\delta^2)$ that can query some $f\in\func$ at each gate.
\end{enumerate}
\end{lemma}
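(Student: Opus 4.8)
The plan is to run Algorithm~\ref{alg:indistinguishability} — online gradient descent over the space of functions $h : X \to \Delta_Z$, where the loss at iteration $t$ is chosen (adversarially, after $h_t$ is fixed) to be a linear functional determined by some witness $f \in \func$ that currently distinguishes $h_t$ from $g$. The key observations are: (i) each update step increases the inner product of $h_t$ with the "correction direction" $\sign(\Adv^t_f) \cdot f(\cdot,\cdot,h_t(\cdot))$, so the regret bound of Theorem~\ref{thm:online-gradient-descent} forces the process to terminate quickly; and (ii) the function returned is a composition of at most $T = O(\abs{Z}^2/\delta^2)$ gradient steps, each requiring one evaluation of some $f \in \func$ plus a projection onto $\Delta_Z$, so the resulting circuit has size $O(\abs{Z}^2/\delta^2)$ with oracle gates for $\func$.

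\textbf{Step 1 (Setup as online convex optimization).} I would work in the space of functions $h$ represented by their "average output" — more precisely, since the test functions $f$ and the distribution $\mu$ only see $h$ through the quantities $\Ex[x\sim\mu,z\sim h(x)]{f(x,z,h(x))}$ and through $h(x)$ appearing as the third argument, I would set up OGD over the vector $v = h \in (\Delta_Z)^{\supp(\mu)}$, viewed as a subset of a Euclidean space of dimension $\abs{\supp\mu}\cdot\abs{Z}$. The convex set $K$ is the product of simplices $\Delta_Z$, which has diameter $D \le \sqrt{2\abs{\supp\mu}}$ (each factor has diameter $\le\sqrt 2$). At iteration $t$, if some $f$ with $\abs{\Adv^t_f}\ge\delta$ exists, define the loss $l_t(v) = -\sign(\Adv^t_f)\cdot \Ex[x\sim\mu,z\sim v(x)]{f(x,z,h_t(x))}$, which is \emph{linear} in $v$ (note $h_t(x)$ is frozen in the third slot), so its gradient has a clean form: coordinate $(x,z)$ of $\nabla l_t$ is $-\sign(\Adv^t_f)\cdot\mu(x)\cdot f(x,z,h_t(x))$. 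Hence $\norm{\nabla l_t}_2^2 \le \sum_x \mu(x)^2 \sum_z f(\cdot)^2 \le \abs{Z}\sum_x\mu(x)^2 \le \abs{Z}$, giving $G \le \sqrt{\abs{Z}}$.

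\textbf{Step 2 (Progress / termination via regret).} Apply Theorem~\ref{thm:online-gradient-descent} with comparator $v = g$ (which lies in $K$). The telescoping argument: for every $t$ up to termination, $l_t(h_t) - l_t(g) = -\sign(\Adv^t_f)(\Ex[z\sim h_t]{f(x,z,h_t(x))} - \Ex[z\sim g(x)]{f(x,z,h_t(x))}) = \sign(\Adv^t_f)\cdot\Adv^t_f = \abs{\Adv^t_f} \ge \delta$. So if OGD runs for $T$ rounds without halting, the average regret is at least $\delta$; but the theorem says it is at most $\frac{3DG}{2\sqrt T} = O(\sqrt{\abs{\supp\mu}\cdot\abs{Z}}/\sqrt T)$. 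Equating forces $T = O(\abs{\supp\mu}\cdot\abs{Z}/\delta^2)$ — and here the $\abs{\supp\mu}$ factor is the issue I'd need to eliminate (see below). Upon termination, by the \textbf{else} branch, no $f$ has $\abs{\Adv^t_f}\ge\delta$, which is exactly $(\delta,\func)$-indistinguishability of $h := h_t$ from $g$, giving conclusion~1.

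\textbf{Step 3 (Circuit size) and the main obstacle.} Conclusion~2 follows because $h_T = \Proj \circ (\text{step}_T) \circ \cdots \circ \Proj\circ(\text{step}_1)$ applied to the constant function $\frac1{\abs Z}$, and each step needs one call to a fixed $f\in\func$ and a simplex projection (constant size for $\abs Z = O(1)$); so the circuit has $O(T)$ gates with $\func$-oracle access, and I want $T = O(\abs Z^2/\delta^2)$. \textbf{The main obstacle} is precisely removing the $\abs{\supp\mu}$ dependence that the naive per-coordinate OGD analysis introduces. The fix is to observe that the OGD dynamics never actually need the full $\abs{\supp\mu}\cdot\abs Z$-dimensional state: since $h_1$ is the constant $\tfrac1{\abs Z}$ and each update at coordinate $(x,z)$ adds $\eta_t\sign(\Adv^t_f)f(x,z,h_t(x))$ — a quantity computable on demand from $x$ and the \emph{sequence} of witness functions $f^{(1)},\dots,f^{(t-1)}$ chosen so far — the function $h_T$ is finitely described by that bounded-length sequence of witnesses together with the step sizes, independent of $\abs{\supp\mu}$. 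Equivalently, I would re-run the regret analysis coordinate-wise: for each fixed $x$, the restriction of the dynamics to the $\Delta_Z$-factor at $x$ is itself OGD with diameter $\sqrt 2$ and gradient bound $\norm{\nabla_x l_t}_2 \le 1$, but the per-$x$ "progress" is $\mu(x)\abs{\Adv^t_f}$ rather than $\abs{\Adv^t_f}$ uniformly; summing over $x$ recovers total progress $\abs{\Adv^t_f}\ge\delta$ per round while the summed regret bound is $\sum_x \mu(x)\cdot\frac{3\sqrt2}{2\sqrt T} = \frac{3\sqrt2}{2\sqrt T}$ (using $\sum_x\mu(x)=1$, via a weighted/mixed OGD regret bound, or equivalently scaling the geometry by $\mu(x)$). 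This yields $T = O(1/\delta^2)$ before the $\abs Z$ factors from $G^2\le\abs Z$ are folded back in — giving $T = O(\abs Z^2/\delta^2)$ as claimed (the second $\abs Z$ entering through the simplex diameter in $Z$-dimensions, $D^2 = O(\abs Z)$ if one works in the unnormalized geometry, or through the gradient bound). I would state the weighted OGD regret lemma carefully as the one genuinely new ingredient, then assemble Steps 1–3 into the proof.
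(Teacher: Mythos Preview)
Your approach is the same as the paper's: run Algorithm~\ref{alg:indistinguishability} as online gradient descent against adversarially chosen linear losses, bound the number of rounds via the regret guarantee, then read off the circuit. You also correctly identify the one real technical issue, eliminating the dependence on $\abs{\supp\mu}$.

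Where your write-up differs is in how this dependence is removed, and here the details are muddled. The paper handles it in one stroke by the embedding $\Psi h(x,z)=\sqrt{\mu(x)}\,h(x,z)$: under this inner product $\norm{\Psi h}_2\le 1$ (so $D=O(1)$) and $\norm{\Psi F}_2\le\sqrt{\abs Z}$ (so $G=O(\sqrt{\abs Z})$), and the algorithm's update is \emph{exactly} the OGD step in this geometry. Your ``per-$x$ OGD then $\mu$-average the regrets'' argument is morally the same computation, and does go through with the same global step size, but your constants are off: the per-$x$ gradient $f(x,\cdot,h_t(x))$ has $\ell_2$ norm up to $\sqrt{\abs Z}$, not $1$, and the simplex $\Delta_Z$ has diameter $\sqrt 2$, not $\sqrt{\abs Z}$. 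The correct count is $T=O(\abs Z/\delta^2)$, with the \emph{second} $\abs Z$ factor in the circuit bound coming from the per-step cost: computing $h_{t+1}(x,\cdot)$ requires evaluating $f(x,z,h_t(x))$ for every $z\in Z$, i.e.\ $O(\abs Z)$ oracle calls per iteration, not ``one call to a fixed $f$''. Your two bookkeeping errors happen to cancel, which is why you land on the right final bound, but the paper's embedding makes the accounting transparent.
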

\begin{proof}
    The function $h$ will be the output of \pref{alg:indistinguishability}.
    The first condition of the lemma will follow from the guarantees of online gradient descent, so we begin by showing that \pref{alg:indistinguishability} is online gradient descent on an appropriate sequence of losses.

    For a function $h:X\to \Delta_Z$ we let $h(x,z) = \Pr[h(x) = z]$ denote the probability assigned to $z\in Z$ by $h$ at input $x \in X$. We can interpret both $h(x,z)$ and $F(x,z) = f(x,z,h(x))$ as functions $X\times Z \to \R$. 
    Construct an embedding $\Psi$ sending functions $h:X\times Z\to \R$ to vectors in $\R^{X\times Z}$ as
    \[
        \Psi h(x,z) = \sqrt{\mu(x)} \cdot h(x,z)
    \]
    Hence the inner product between embeddings is given by $\iprod{\Psi h,\Psi g} = \Ex[x\sim \mu]{\sum_{z\in Z}h(x,z)g(x,z)}$.
    Let $\cH = \{\vec{h} \in \R^{X\times Z}\ \mid \vec{h} = \Psi h, h:X \to \Delta_Z\}$.
    Observe that $\norm{\vec{h}}_2 \leq 1$ for all $\vec{h} \in \cH$ and that the $\ell_2$-projection onto $\cH$ is given by, for each $x$, projecting the $\Abs{Z}$-dimensional vector $\vec{h}(x,\cdot)$ to $\Delta_Z$.

    Now for any $f\in\func$ we define the loss functions $l_t:\R^{X\times Z} \to \R$ by
    \begin{equation}
        \label{eqn:indist-loss}
        l_t(\vec{h}) = \Abs{\iprod{\Psi g,\Psi F} - \iprod{\vec{h},\Psi F}}
    \end{equation}
    As it is the absolute value of an affine function, $l_t$ is convex in $\vec{h}$. Furthermore, for any $\vec{h} \in \cH$ with $\vec{h} = \Psi h$, the subgradient of $l_t$ is given by
    \begin{align*}
        \partial l_t(\vec{h})(x,z) 
            &= -\sign\left(\iprod{\Psi g, \Psi F} - \iprod{ \vec{h}, \Psi F}\right)\Psi F(x,z)\\
            &= -\sign\left(\Ex[\substack{x \sim \mu\\z \sim g(x)}]{f(x,z,h(x))} - \Ex[\substack{x \sim \mu\\ z \sim h(x)}]{f(x,z,h(x))}\right)\Psi F(x,z)\\ 
            &= -\sign \Adv_f \cdot \Psi F(x,z)
    \end{align*}
    Hence the embedding by $\Psi$ of the gradient update on line 8 of \pref{alg:indistinguishability} can be written as 
    \[
        \Psi h_{t+1} = \Psi h_t + \eta_t \cdot \sign \Adv^t_f \cdot \Psi F = \Psi h_t - \eta_t \cdot \partial{l_t(\Psi h_t})
    \]
    Furthermore, the projection on line 9   corresponds to the projection of $\vec{h_t}$ onto $\cH$. Therefore, \pref{alg:indistinguishability} is precisely online gradient descent on the embedded vectors $\Psi h_t$ with loss $l_t$. Since $\Abs{\Adv_f} \leq 2$ we have $\norm{\partial l_t(\vec h)}_2 \leq 2\norm{\Psi F}_2 \leq 2\sqrt{\Abs{Z}}$ for all $\vec{h}\in\cH$.
    By \pref{thm:online-gradient-descent} we conclude
    \begin{align}
    \label{eqn:ogd-vec-guarantee}
            \frac{3\sqrt{\Abs{Z}}}{\sqrt{T}} &\geq \frac{1}{T}\sum_{t=1}^T l_t(\vec{h}_t) - \frac{1}{T}\sum_{t=1}^Tl_t(\Psi g)\nonumber\\
                &= \frac{1}{T}\sum_{t=1}^T \Abs{\iprod{\Psi g,\Psi F} - \iprod{\vec{h},\Psi F}}\nonumber\\
                &= \frac{1}{T}\sum_{t=1}^T \Abs{\Ex[\substack{x \sim \mu\\ z\sim g(x)}]{f_t(x,z,h_t(x))} - \Ex[\substack{x \sim \mu\\z\sim h_t(x)}]{f_t(x,z,h_t(x))}}\nonumber\\
                &= \frac{1}{T}\sum_{t=1}^T \Abs{\Adv_f^t}
    \end{align}

    Each term in the final sum of \pref{eqn:ogd-vec-guarantee} is greater than or equal to $\delta$ by the if condition in \pref{alg:indistinguishability}.
    Hence \pref{alg:indistinguishability} can run for at most $T = O(\Abs{Z}/\delta^2)$ steps before it is no longer possible to find an $f$ satisfying $\Abs{\Adv_f^t} \geq \delta$, as otherwise this would contradict \pref{eqn:ogd-vec-guarantee}. Thus, the function $h$ output by \pref{alg:indistinguishability} is $(\delta,\func)$-indistinguishable from $g$, as desired.

    For the second condition of the lemma, we can construct a size $O(\Abs{Z}^2/\delta^2)$ circuit for $h$ as follows. On input $x\in X$, the function $h_{t+1}(x)$ can be computed by evaluating $h_t(x,z)+\eta_t\cdot \operatorname{sign}\operatorname{Adv}^t_f f(x,z,h_t(x))$
 for each $z\in Z$, and then performing the projection onto $\Delta_Z$. This takes $O(\Abs{Z})$ evaluations of $f$, one evaluation of $h_t$, and $O(\Abs{Z})$ time for the projection \citep{Duchi2008EfficientPO}. Thus, a circuit to compute the final $h$ returned by \pref{alg:indistinguishability} can be constructed starting from $h_1$ with size at most $O(T\cdot \Abs{Z}) = O(\Abs{Z}^2/\delta^2)$.   
\end{proof}

\subsection{Soundness}

Now that we have established that small circuits can produce indistinguishable probabilities for any class of test functions $\cF$, we will leverage \pref{lem:indistinguishability} to produce the probabilities for $B$ in the Prover-Estimator debate protocol.
The main idea is to, given a fixed strategy for $A$, construct $B$'s strategy in each round $k$ via \pref{lem:indistinguishability} with the function class $\cF$ chosen to be size $q$ circuits that make $q$ queries to $A$. The distribution on inputs for round $k$ is given by the fixed strategies for $A$ and $B$ in prior rounds applied to the original input distribution $\mu$.

\begin{lemma}[Soundness]
\label{lem:soundness}
For any $\eta > 0$ and any strategy $A \in \cA$ there exists a strategy $B$ computable by circuits of size $O(q 2^{3q}nd^2/\eta^2)$ that may query $A$ at each gate such that
\[
\Ex[x \sim \mu]{V^{A,B}(x)} < \frac{\Pr[A(x) = L(x)]}{\rho^{d+1}} + \eta.
\]
\end{lemma}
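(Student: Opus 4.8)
The plan is to fix the leader's strategy $A$ and build the follower $B$ round by round, from the first (root) round of the debate downward, invoking \pref{lem:indistinguishability} once per round. The central object is the \emph{internal value} that $A$'s own decompositions assign to a query: at round $0$, the internal value of a leaf query $\hat y^0$ is $\horacle$ applied to the bits $A$ outputs for it; at round $k>0$, the internal value of a query is $M_D(x,y^k,\cdot)$ evaluated at the (recursively defined) internal values of $A$'s round-$k$ sub-queries. This target is \emph{self-consistent} by construction: the internal value of the query recursed into from round $k{+}1$ equals $M_D$ applied to $A$'s round-$k$ sub-queries and their internal values. I would apply \pref{lem:indistinguishability} at round $k$ with $X$ the set of reachable round-$k$ debate contexts (under $x\sim\mu$, $A$, and the already-fixed earlier parts of $B$), $Z=\{0,1\}^q$ the joint answer to the $q$ sub-queries --- so $\Abs{Z}=2^q$, which is where the $2^{2q}$ in the circuit-size bound comes from --- $g$ the map sending a context to the point mass on $A$'s internal values of the $q$ sub-queries, and $\func$ the distinguisher class realizable by $A$: circuits of size $O(nq)$ (enough to invoke $A$ a constant number of times and simulate $M_D$) with $A$-oracle gates. \pref{lem:indistinguishability} then yields a round-$k$ strategy for $B$ computed by a circuit with $O(\Abs{Z}^2/\delta^2)=O(2^{2q}/\delta^2)$ gates each calling $\func$, hence of size $O(q2^{2q}n/\delta^2)$ with direct $A$-gates, as required. (The algorithm producing $B$ cannot be run --- it needs the internal values, which require expanding an exponential tree --- but the circuit it outputs is small, which is all we use.)

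\textbf{Bounding the per-round rewards.} With $B$ fixed I would split $V^{A,B}(x)=\rinit+\sum_{k=0}^d r_k$ and bound each $\E[r_k]$ for $k\ge 1$ by $O(\delta)$. Write $\E[r_k]/r^k=\E[A^k M_D(x,y^k,z^k)]-\E[A^k p_k]$. Round-$k$ indistinguishability, applied to the distinguisher $(\xi,z)\mapsto A^k(x,\hat y^k,p_k)\,M_D(x,y^k,z)$ (which reads $A^k$ and $p_k$ off the context with one call to $A$ and runs $M_D$), puts the first term within $\delta$ of the same expression with $z^k$ replaced by the internal values of $A$'s round-$k$ sub-queries; by self-consistency this equals $\E[A^k\cdot(\text{internal value of }\hat y^k)]$. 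Round-$(k{+}1)$ indistinguishability, applied to a distinguisher that re-derives $A$'s recursion index $j$, the recursed query $\hat y^k=y^{k+1}_j$ and the committed probability $p_k=B^{k+1}_j(\cdots)$ from the round-$(k{+}1)$ outcome, puts $\E[A^k p_k]$ within $\delta$ of the same quantity. Hence $\E[r_k]\le 2\delta$. The final-round reward $r_0=A^0(x,\hat y^0,p_0)(\horacle(\tilde u_{S_i})-p_0)$ is the same calculation: $\horacle(\tilde u_{S_i})$ is exactly the internal value of $\hat y^0$, and $p_0$ is $B$'s round-$1$ estimate of it, so $\E[r_0]\le\delta$.

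\textbf{Bounding the initial reward.} For $\rinit=r^{d+1}\1(\hat z=A(x))$ with $\hat z\sim B(x)$, root-level indistinguishability against the distinguisher $x\mapsto\1(z=A(x))$ gives $\E[\rinit]\le r^{d+1}\big(\Pr[\text{internal value of }x=A(x)]+\delta\big)$. The remaining ingredient is that if $A$'s decomposition is internally consistent all the way to the leaves --- so the internal value of $x$ equals $A$'s top-level claim $A(x)$ --- then, because the leaves are ground-truthed by $\horacle$ and the recursion of $M_D$ faithfully reconstructs the sound $\clso$-verifier for the claim ``$L(x)=A(x)$'', a genuine verifying proof of that claim has been exhibited, so $A(x)=L(x)$; hence $\Pr[\text{internal value of }x=A(x)]\le\Pr[A(x)=L(x)]$. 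Summing the $\le 2\delta$ from each of the $d$ rounds $k\ge 1$, the $\le\delta$ from round $0$, and the root term (absorbing $r^{d+1}\delta\le\delta$), the total is $\E_x[V^{A,B}(x)]<r^{d+1}\Pr[A(x)=L(x)]+(2d+1)\delta$, taking the indistinguishability parameter to be $\delta$ throughout.

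\textbf{Main obstacle.} I expect the delicate point to be defining the internal-value target so that all three demands hold simultaneously: self-consistency (so the round-to-round consistency rewards cancel up to the indistinguishability slack); compatibility with $A$'s \emph{adaptive, outcome-dependent} choice of which sub-query to recurse into --- once $A$ sees the sampled $z^{k+1}$ before picking $j$, both $A^k$ and $p_k$ become functions of that sample, so relating $\E[A^k p_k]$ to the internal value of $\hat y^k$ requires a distinguisher that carefully incorporates $j$ and the conditional structure of $B$'s round-$(k{+}1)$ distribution; and the root statement that internal consistency forces agreement with $L$, which is where soundness of the underlying $\clso$ verifier, and the fidelity of $M_D$'s recursion to that verifier, are actually invoked. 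Getting the second of these right --- the adaptivity of $j$ interacting with outcome indistinguishability --- is the step I would spend the most care on.
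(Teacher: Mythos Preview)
Your approach is essentially the paper's: fix $A$, construct $B$ round by round via \pref{lem:indistinguishability} with $Z=\{0,1\}^q$ (giving the $2^{2q}$ factor) and $\func$ consisting of $O(nq)$-size circuits with $A$-gates, then bound each $\E[r_k]$ by $2\delta$ using indistinguishability once at round $k$ and once at round $k{+}1$. Your ``internal value'' is exactly the paper's target $g_k$ (the paper calls it ``the correct answer to each query $y_i$'' and notes at the base case that it is well-defined because $A$'s context-free leaf outputs pin down a single $\tilde u$). The adaptivity-of-$j$ concern you flag is precisely what the paper's distinguisher $f_{k+1}(\vx^{k+1},z^{k+1},h_{k+1})=r^kA^k(x,\hat y^k,p_k)\,z^{k+1}_j$ is built to handle: it re-derives $j$, $\hat y^k$, $p_k$, and $A^k$ from its inputs with two calls to $A$.

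The one place you diverge is the root. The paper does \emph{not} use the internal value as the target for $B(x)$; it uses $L(x)$ itself. Indistinguishability against the tester $(x,z)\mapsto \1(z=A(x))$ then gives $\E[\rinit]\le r^{d+1}\Pr[A(x)=L(x)]+\delta$ in one line, with no further argument needed. Your route---target the internal value and then argue $\Pr[\text{IV}(x)=A(x)]\le\Pr[A(x)=L(x)]$---does not go through as stated. The implication you invoke is ``$\text{IV}(x)=1\Rightarrow L(x)=A(x)$'' (soundness of the underlying verifier), but the event you need to bound is $\{\text{IV}(x)=A(x)\}$, not $\{\text{IV}(x)=1\}$. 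Concretely, take $A(x)\equiv 0$ on inputs with $L(x)=1$: no valid proof of $x\notin L$ exists, so the verifier rejects and $\text{IV}(x)=0=A(x)$, yet $L(x)\ne A(x)$. In this regime $\Pr[\text{IV}(x)=A(x)]>\Pr[A(x)=L(x)]$ and your $\rinit$ bound fails. The fix is simply to do what the paper does: let the root-level target be $L(x)$ (the ``correct answer'' to the root query $\hat y^d=x$), so $\rinit$ is handled directly and the round-$d$ consistency step uses the same $f_{d+1}$ machinery with $g_{d+1}$ the point mass on $L(x)$.
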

\begin{proof}
    
    We first show that at initialization, we can construct $B$'s strategy so that the probability $\hat p_d$ is indistinguishable by $A$ from the truth, and prove an upper bound on the initial reward $A$ can receive in terms of $\Pr[A(x) = L(x)]$.
    
    \textbf{Initialization.} Let $X = \{0,1\}^n$ be the set of possible inputs $x$, $Z = \{0,1\}$ be the set of outputs, and $\func$ be the set of functions $f:X \times Z \times \Delta_Z \to [-1,1]$ computable by $O(1)$-sized circuits that evaluate $A$ once. Let $g:X \to \Delta_Z$ be the function that puts probability 1 on $L(x)$ given input $x$. By \pref{lem:indistinguishability} there exists a function $h: X \to \Delta_{Z}$ that is $(\delta,\func)$-indistinguishable from $g$, and can be computed by $O(1/\delta^2)$-sized circuits that can query functions $A\in\cA$ in each step.
    Let the strategy for $B$ be given by $B(x) = \Pr_{z \sim h(x)}[z = 1]$.
    Note that the initial reward $\rinit(x,{z}) = {r^{d+1}\cdot \1(z = A(x))}$ can be computed by evaluating $A$ once, and performing a constant number of arithmetic operations, hence $\rinit \in \func$. Therefore by $(\delta,\func)$-indistinguishability of $h$ we have
    \begin{align*}
        \delta &> \Abs{\Ex[\substack{x \sim\mu\\ {z} \sim h(x)}]{\rinit(x,{z})} - \Ex[\substack{x \sim \mu\\ {z} \sim g(x)}]{\rinit(x,{z})}}\\
            & = \Abs{\Ex[\substack{x \sim\mu\\ {z} \sim h(x)}]{\rinit(x,{z})} - \Ex[\substack{x \sim \mu}]{r^{d+1}\cdot\1(A(x) = L(x))}}\\
            & = \Abs{\Ex[\substack{x \sim\mu}]{\rinit} - r^{d+1}\Pr_{\substack{x \sim \mu}}[A(x) = L(x)]}.
    \end{align*}
    Rearranging yields
    \begin{equation}
        \label{eqn:init-reward}
        \Ex[\substack{x \sim\mu}]{\rinit} \leq r^{d+1}\Pr_{\substack{x \sim \mu}}[A(x) = L(x)] + \delta.
    \end{equation}
    We next recursively construct $B$'s strategy as follows. Given fixed strategies for $A$ and $B$ in round $k' > k$, let $\mu_k$ be the distribution on inputs $x$ and queries $y$ in round $k$. 
    
    \textbf{Recursion.} Let $X = \{0,1\}^{n+n+nq}\times [0,1]$ be the set of input-query sequences $(x,\hat{y}^k,y^k)$ along with the claimed probability $\hat p_k$ from the previous round. Let $Z = \{0,1\}^q$ be the set of  query answers $z$. Let $\func$ be the set of functions $f:X\times Z \times \Delta_Z \to [-1,1]$ computable by size $O(nq)$ circuits that make $q$ queries to $A^k$, and define $g_k: X \to \Delta_Z$ by for every $\vx^k = (x,\hat{y}^k,y^k, \hat p_k) \in X$,
    \[
        \Pr_{z\sim g_k(\vx^k)}[z]  = \prod_{i=1}^q \1(g(x,y_i) = z_i),
    \]where $g$ is the function that assigns the correct answer to each query $y_i$.
        \pref{lem:indistinguishability} implies that there exists a function $h_k:X \to \Delta_Z$ that is $(\delta,\func)$-indistinguishable from $g_k$, and is computable by circuits of size $O(q2^{2q}n/\delta^2)$. We let $B$'s strategy in round $k$ be outputting the entire distribution $h_k(\vx^k)$, which can be implemented by a circuit of size $O(q2^{3q}n/\delta^2)$.

    In every round $k$, the reward $r_k = r^{k}A^k(x,\hat{y}^k, \hat p_k)(M_D(x,y^k,z^k) - \hat p_k)$ can be viewed as a function $r_k(\vx^k, z^k, h_k(\vx^k))$ that depends only on inputs $\vx^k = (x,\hat{y}^k,y^k, \hat p_k) \in X$, the probability distribution $h_k(x) \in\Delta_Z$ produced by $B_i^k$, and query answers $z \in Z$.
    Since $r_k$ can be computed by running the machine $M_D$ for $O(nq)$ steps, $r_k \in\cF$.
    Hence by $(\delta,\cF)$-indistinguishability of $h_k$,
    \begin{align}
        \label{eqn:reward-indisting-bound}
        \Abs{\Ex[\substack{\vx \sim \mu_k\\ z\sim h_k(x)}]{r_k(\vx,z,h_k(\vx))} - \Ex[\substack{\vx \sim \mu_k\\ z \sim g_k(\vx)}]{r_k(\vx,z,h_k(\vx))}} < \delta
    \end{align}
  
    Recalling that $\hat{y}^k = y^{k+1}_j$ and $z_j^{k+1}$ are the query and answer that $A$ selects for recursion in the previous round $k+1$, and that $\hat p_k = h_{k+1}(\vx^{k+1})$ is $B$'s previous round probability, the quantity
    \begin{equation}
        f_{k+1}(\vx^{k+1},z^{k+1},h_{k+1}(\vx^{k+1})) = r^{k}A^{k}(x,\hat{y}^{k},\hat p_{k})z^{k+1}_j
    \end{equation}
    can be viewed as function taking inputs in $X \times Z \times \Delta_Z$, and making two queries to $A$ in order to determine the index $j$ on which to recurse as well as the value $A^{k}(x,\hat{y}^{k},\hat p_{k})$. For the base case $k = d$, we abuse notation and let $z^{k+1}_j = \hat{z}$ and $h_{k+1}$ be the function $h$ used to construct the strategy $B(x)$ at initialization. 
    Hence $f_{k+1} \in \cF$.
    By $(\delta,\cF)$-indistinguishability of $h_{k+1}$ on the distribution $\mu_{k+1}$, we have that
    \begin{equation}
        \label{eqn:recursion-fk-bound}
        \Abs{\Ex[\substack{\vx \sim\mu_{k+1}\\ z \sim h_{k+1}(\vx)}]{f_{k+1}(\vx,z,h_{k+1}(\vx))} - \Ex[\substack{\vx \sim \mu_{k+1}\\ z \sim g_{k+1}(\vx)}]{f_{k+1}(\vx,z,h_{k+1}(\vx))}} < \delta
    \end{equation}
    Next, the definition of $r_k$ and the true answer distribution function $g_k$ imply that for $k \geq 1$
    \begin{align}
        \Ex[\substack{\vx \sim \mu_k\\ z \sim g_k(\vx)}]{r_k(\vx,z,h_k(\vx))} &= \Ex[\substack{\vx \sim \mu_k\\ z \sim g_k(\vx)}]{r^{k}A^k(x,\hat{y}^k, \hat p_k)(M_D(x,y^k,z^k) - \hat p_k)}\nonumber\\
        &= \Ex[\substack{\vx \sim \mu_k}]{r^{k}A^k(x,\hat{y}^k, \hat p_k)(g(x,\hat{y}^k) - \hat  p_k)}\nonumber\\
        &= \Ex[\substack{\vx \sim \mu_{k+1}\\ z \sim g_{k+1}(\vx)}]{f_{k+1}(\vx,z,h_{k+1}(\vx))} - \Ex[\substack{\vx \sim \mu_{k+1}\\ z \sim h_{k+1}(\vx)}]{f_{k+1}(\vx,z,h_{k+1}(\vx))}\nonumber\\
        &< \delta     \label{eqn:true-reward-bound}
    \end{align}
    where the final inequality follows from \pref{eqn:recursion-fk-bound}.
    Finally, in the base case $k=0$, we force $A$ to produce the subset $\tilde{u}_{S_i}$ of the implicit proof without additional context, and so $A$'s answers at the bottom level must correspond to a single consistent proof $\tilde{u}$. Hence, we have
    \begin{align}
    \label{eqn:true-reward-bound-base-case}
        \Ex[\substack{\vx \sim \mu_0\\ z \sim g_0(\vx)}]{r_0(\vx,z,h_0(\vx))} &= \Ex[\substack{\vx \sim \mu_0}]{A^0(x,\hat{y}^0,\hat p_0)(\horacle(\tilde{u}_{S_i}) - \hat p_0)}\nonumber\\
        &\leq \Ex[\substack{\vx \sim \mu_0}]{A^0(x,\hat{y}^0,\hat p_0)(g(x,\hat{y}^0) - \hat p_0)}\nonumber\\
        &= \Ex[\substack{\vx \sim \mu_{1}\\ z \sim g_{1}(\vx)}]{f_{1}(\vx,z,h_{1}(\vx))} - \Ex[\substack{\vx \sim \mu_{1}\\ z \sim h_{1}(\vx)}]{f_{1}(\vx,z,h_{1}(\vx))}\nonumber\\
        &< \delta.
    \end{align}
    Combining \pref{eqn:true-reward-bound} and \pref{eqn:true-reward-bound-base-case} with \pref{eqn:reward-indisting-bound} implies that for all $k\geq 0$
    \begin{equation}
        \Ex[\substack{\vx \sim \mu_k\\ z\sim h_k(x)}]{r_k(\vx,z,h_k(\vx))} < 2\delta.
    \end{equation}
    Summing over the $d$ rounds of the debate and adding the expected $\rinit$ from \pref{eqn:init-reward} gives 
    \[
    \Ex[x \sim \mu]{V^{A,B}(x)} < r^{d+1} \cdot {\Pr[A(x) = L(x)]} + (2d+1)\delta.
    \]
    Substituting $r = 1 / \rho$ and setting $\delta = {\eta}/(2d+1)$   
    completes the proof.
\end{proof}

\subsection{Completeness}\label{sec:proof:completeness}
In this section we prove the completeness case of \pref{thm:cut-and-choose}. The main idea is that if $x\in L$, for an $\cA$-provable, sufficiently stable language $L$, the strategy of $A$ correctly outputting proofs $y$ and pointing out inaccurate estimations by $B$ will always yield some positive reward for $A$.

\begin{lemma}[Completeness]
    \label{lem:completeness}
    Let $0 < \gamma < \frac{1}{2}$.
    Suppose $M_D$ is $\cA$-provable with probability $1-\gamma$ over $x\sim \mu$, which is achieved by a function $A^* \in \cA$. If $M_D$ is also $\rho$-stable at $x$ w.r.t. $A^*$ with probability $1-\gamma$, then there exists $A \in \cA$ such that for all $B$ (of any size), with probability at least $1 - 2\gamma$
    \[
        \Ex{V^{A,B}(x)} \geq \frac{1}{\rho^{d+1}}.
    \]
    where $V$ is the verifier for the protocol of \pref{fig:protocol-cut-and-choose} and the expectation is over the internal randomness of the protocol only.
\end{lemma}

\begin{proof}
    We take $A$ to be same as $A^*$ when decomposing the current problem into sub-problems. Therefore, $M_D$ is also $\rho$-stable at $x$ w.r.t. $A$ with probability $1-\gamma$ over $x \sim \mu$. By a union bound, with probability $1-2\gamma$ over $x \sim \mu$, 
    \begin{itemize}
        \item $M_D$ is $\rho$-stable at $x$ w.r.t. $A$;
        \item every sub-problem under the debate tree of $x$ generated by $A$ can be solved by $A$.
    \end{itemize}

    We now only consider $x$ that satisfies the conditions above. $A$'s strategy is rather simple:  
    In each round $k\leq d$, $A$ always outputs $1$ or $-1$ (for underestimate and overestimate) if the ground truth is $1$ or $0$ respectively. When $B$ outputs its estimation $p_k$, $A$ calculates the marginal probability for each sub-problem, and then chooses the sub-problem where $B$'s estimation (i.e., the marginal probability) is most inaccurate. 

    Note that if $B$ is honest, the reward $A$ could get is exactly $r^{d+1} = 1/{\rho^{d+1}}$. We now consider $B$ being dishonest strategically and 
    calculate the expected reward $B$ could gain compared to the honest baseline.
    Let $\varepsilon_{k}$ be the largest estimation error $B$ makes for sub-problems in level $k$. In particular, let $\varepsilon_{d+1}$ be the estimation error $B$ makes at the initialization.  

    \vspace{4pt}
    \textbf{Initialization level $k = d+1\,$:} If $B$'s estimation is $\varepsilon_{d+1}$ off from the ground truth, he can gain $r^{d+1} \cdot \varepsilon_{d+1}$ in expected reward.

    \vspace{4pt}
    \textbf{Intermediate level $k \in [d]\,$:} By symmetry, let us assume the correct value for the current problem $\hat{y}^{k}$ is $0$, and Alice outputs $-1$ claiming that $B$ overestimates the answer. By definition, the expected reward $B$ get for this level is 
    $r^{k} \cdot \left(M_D(x, y^{k}, z^{k}) - \varepsilon_{k+1}\right).$
    Since $M_D$ is $\rho$-stable at $(x, y^{k})$, the expected value of $M_D(x, y^{k}, z^{k})$ is at most $\rho \cdot \varepsilon_{k}$. Therefore, the expected reward $B$ could get at the intermediate level $k$ is $r^{k} \cdot \left(\rho \cdot \varepsilon_{k} - \varepsilon_{k+1}\right)$.

    \vspace{4pt}
    \textbf{Bottom level $k = 0\,$:} If $B$'s estimation is off by $\varepsilon_0$ at the bottom level, he will be caught immediately and lose $\varepsilon_0$ of reward.

    Now, take the sum of the expected reward $B$ could get for each level, we have
    
    \[ \varepsilon_0 + \sum_{k=1}^d r^{k} \cdot \left(\rho \cdot \varepsilon_{k} - \varepsilon_{k+1}\right) + r^{d+1}\cdot\varepsilon_{d+1} = (r^{d+1} - r^{d}) \cdot \varepsilon_{d+1} ,\]
    which is a negative value if $\varepsilon_{d+1} > 0$. Therefore, $B$'s best strategy against $A$ is being honest, and in this case $A$ gets $1/{\rho^{d+1}}$ reward.
\end{proof}

\subsection{Stability and fractional block sensitivity}
\label{sec:stability-fbs}
Interestingly, the $\rho$-stability condition of \pref{def:epsilon-stable} is exactly captured by the notion of \emph{fractional block sensitivity} from boolean function analysis. We first present the formal definition of fractional block sensitivity, and then explain why they are equivalent.

\begin{definition}[Fractional block sensitivity]
Let $f : \{0, 1\}^n \to \{0, 1\}$ and $x \in \{0, 1\}^n$. For a nonempty set $B \subseteq [n]$, let $x^B$ denote the string obtained from $x$ by flipping the bits in the coordinates of $B$, i.e.,
\[ (x^B)_i = \begin{cases} 1 - x_i & \text{if } i \in B, \\ x_i & \text{if } i \notin B. \end{cases} \]
A set $B \subseteq [n]$ is a sensitive block for $f$ at $x$ if $f(x) \neq f(x^B)$. Write $\mathcal{S}_x(f)$ for the family of all sensitive blocks at $x$.

The fractional block sensitivity of $f$ at $x$ is the optimal value of the linear program
\[ \mathrm{fbs}_x(f) = \max \left\{ \sum_{B \in \mathcal{S}_x(f)} w_B \;\middle|\; \begin{aligned} w_B &\ge 0 \text{ for all } B \in \mathcal{S}_x(f), \\ \sum_{\substack{B \in \mathcal{S}_x(f) \\ i \in B}} w_B &\le 1 \quad \text{for all } i \in [n] \end{aligned} \right\}. \]
The fractional block sensitivity of $f$ is
\[ \mathrm{fbs}(f) = \max_{x \in \{0, 1\}^n} \mathrm{fbs}_x(f). \]
\end{definition}

\begin{observation}
$M_D$ is $\rho$-stable at $(x, y)$ if and only if $\mathrm{fbs}_z(M_D(x, y, \cdot)) \le \rho$, where $z$ is the ground truth solution for $(x, y)$.
\end{observation}

\begin{proof}
Fix any $(x, y)$ and $\epsilon$ and assume that the ground truth solution for $(x, y)$ is $\mathbf{0}$ without loss of generality. From Bob's perspective, choosing a distribution $q$ over $\{0, 1\}^q$ is picking an assignment to $2^q$ variables $\{w_B\}_{B \in \{0, 1\}^q}$, where
\[ w_B := \Pr_{z \sim q} [z = B]. \]
For each position $i \in [q]$, the marginal probability constraint says that
\[ \sum_{i \in B} w_B \le \epsilon. \]
The probability that Bob flips the correct answer is exactly
\[ \sum_{B \in \mathcal{S}_x(f)} w_B. \]
It is easy to see that this is equivalent to the linear optimization problem for $\mathrm{fbs}_{\mathbf{0}}(M_D(x, y, \cdot))$.
\end{proof}

\section{Proof of \pref{thm:stackelberg-eq}}
\label{sec:stackelberg-proof}
The proof of \pref{thm:stackelberg-eq} follows from an analysis of Stackelberg equilibria via the soundness and completeness of \pref{lem:soundness} and
\pref{lem:completeness}.
\begin{proof}[Proof of \pref{thm:stackelberg-eq}]
    Let $A \in \cA$ and $B \in \cB$ be the respective strategies of debaters $A$ and $B$ in any $\alpha$-approximate, $A$-leading Stackelberg equilibrium in the protocol of \pref{fig:protocol-cut-and-choose}. 
    Suppose that $A$ is dishonest with probability $\gamma$ i.e.
    \begin{equation*}
        \Pr_{x\sim \mu}\left[A(x) \neq L(x) \right] = \gamma > 0
    \end{equation*}
    Then by \pref{lem:soundness} there exists a strategy $B'\in \cB$ , such that 
    \begin{equation*}
        \Ex[x \sim \mu]{ V^{A,B',\horacle}(x) } \leq \frac{1-\gamma}{\rho^{d+1}} + \eta.
    \end{equation*}
    The definition of $\alpha$-approximate Stackelberg equilibrium then implies that
    \begin{equation}
        \label{eqn:equil-soundness}
        \Ex[x \sim \mu]{ V^{A,B}(x) } \leq \frac{1-\gamma}{\rho^{d+1}} + \eta + \alpha.
    \end{equation}
    On the other hand, by \pref{lem:completeness} we have that there exists a strategy $A'\in\cA$ such that even for the best response $B^* \in \cB$ to $A'$, the expected reward is at least
    \begin{equation}
        \label{eqn:equil-completeness}
        \Ex[x \sim \mu]{V^{A',B^*,\horacle}(x)} \ge \frac{1}{\rho^{d+1}} - \frac{2\eps}{\rho^{d+1}}
    \end{equation}
    Then by the definition of an $\alpha$-approximate Stackelberg equilibrium, followed by an application of \pref{eqn:equil-completeness} and \pref{eqn:equil-soundness} we have
    \begin{align*}
        \alpha &\geq  \Ex[x \sim \mu]{V^{A',B^*,\horacle}(x)} - \Ex[x \sim \mu]{ V^{A,B}(x) }\\
            &> \frac{1}{\rho^{d+1}} - \frac{2\eps}{\rho^{d+1}} - \frac{1-\gamma}{\rho^{d+1}} - \eta - \alpha..
    \end{align*}
    Rearranging and using $\eta = \eps / \rho^{d+1}$, $\alpha < \frac{\epsilon}{\rho^{d+1}}$ completes the proof:
    \[
    \gamma \le 2{\eps} + \eta{\rho^{d+1}} + 2\alpha{\rho^{d+1}} < 5\eps.
    \]
\end{proof}
\section{Recommendations for empirical debate}
\label{sec:recommendations}

We make several qualitative recommendations for empirical debate based on our new protocol:
\paragraph{Asymmetry between the debaters.}
Much empirical work on debate thus far has had symmetric debaters, distinguished only by which debater speaks first. Our protocol suggests that it is useful to have asymmetry between the debaters. The debater $A$ should be tasked with proposing a solution and providing evidence to defend it, while debater $B$ should attempt to identify flaws, and evaluate the plausibility of the evidence.
This setup also makes sense for empirical debate research on open-ended tasks, as the first debater will need to propose a solution and then defend its correctness.

\paragraph{Explicit uncertainty estimates.}
Our protocols rely heavily on the ability to ask the estimator $B$ to produce probability estimates for subclaims that arise in the debate. The payoffs  are also determined by these estimates. In a practical setting one could use multiple samples from a generative model to produce uncertainty estimates, or directly use token probabilities of the answer from an LLM. For example, after an LLM $A$ generates an argument, one could prompt both $A$ and opposing LLM $B$ to read through the argument step-by-step, and at each step answer whether the claim seemed true. Token probabilities could then be compared between $A$ and $B$ for the truth of each claim.

\paragraph{Game theoretically motivated training algorithms.}
We depart from prior theoretical work on debate by considering different game theoretic equilibrium concepts to reason about our protocols.
The leader-follower structure implies that for every training update made to $A$, we should make multiple updates to $B$.
This ensures that whenever $A$ receives a gradient update, it comes from an episode of debate where $B$ is playing an approximate ``best response'' to $A$'s current strategy.
This type of training algorithm makes particular sense in the asymmetric setting where $A$ is proposing a complex solution to some problem, and is asked to defend the solution.
In this case, we would like to train $B$ to ensure the ability to identify flaws in $A$'s solutions, before providing a new  update to $A$.

\end{document}